\pdfoutput=1
\documentclass{article}

\usepackage{microtype}
\usepackage{graphicx}
\usepackage{subcaption}
\usepackage{booktabs} 
\usepackage{adjustbox}
\usepackage{amsmath}
\DeclareMathOperator*{\argmin}{argmin}
\usepackage{algorithm}
\usepackage{algorithmic}

\usepackage{graphicx}
\usepackage{tikz}
\usetikzlibrary{positioning, arrows.meta}




\usepackage{hyperref}
\hypersetup{
  bookmarks=true,
  bookmarksnumbered=true,
  bookmarksopen=true,
  bookmarksopenlevel=2,
  pdfstartview=FitH
}
\usepackage[accepted]{icml2026}


\usepackage{amsmath}
\usepackage{amsfonts}
\usepackage{amssymb}
\usepackage{mathtools}
\usepackage{amsthm}
\usepackage{multirow}

\usepackage{tabularx}
\usepackage[table]{xcolor}
\definecolor{Gray}{gray}{0.9}

\usepackage[capitalize,noabbrev]{cleveref}

\usepackage{enumitem}

\theoremstyle{plain}
\newtheorem{theorem}{Theorem}[section]
\newtheorem{proposition}[theorem]{Proposition}
\newtheorem{lemma}[theorem]{Lemma}

\theoremstyle{definition}

\theoremstyle{remark}

\renewcommand{\hat}{\widehat}

\usepackage[textsize=tiny]{todonotes}

\icmltitlerunning{Model-agnostic Selective Labeling with Provable Statistical Guarantees}

\begin{document}

\twocolumn[
  \icmltitle{Model-agnostic Selective Labeling with Provable Statistical Guarantees}



  \icmlsetsymbol{equal}{*}


    \begin{icmlauthorlist}
    \icmlauthor{Huipeng Huang}{equal,sustech}
    \icmlauthor{Wenbo Liao}{equal,sustech,cuhk}
    \icmlauthor{Huajun Xi}{sustech}
    \icmlauthor{Hao Zeng}{sustech}
    \icmlauthor{Mengchen Zhao}{scut}
    \icmlauthor{Hongxin Wei}{sustech}
\end{icmlauthorlist}

  \icmlaffiliation{sustech}{Department of Statistics and Data Science, Southern University of Science and Technology}
    \icmlaffiliation{cuhk}{Department of  Mathematics, The Chinese University of HongKong}
    \icmlaffiliation{scut}{School of Software Engineering, South China University of Technology}

\icmlcorrespondingauthor{Hongxin Wei}{weihx@sustech.edu.cn}

  \icmlkeywords{Selective Labeling, Uncertainty Quantification, Conformal Inference}

  \vskip 0.3in
]



\printAffiliationsAndNotice{\icmlEqualContribution} 

\begin{abstract}
Obtaining high-quality labels for large datasets is expensive, requiring massive annotations from human experts.
While AI models offer a cost-effective alternative by predicting labels, their predictions inevitably contain errors.
Existing methods mitigate this issue through selective labeling, where AI labels a subset and experts label the rest.
However, these methods lack rigorous theoretical guarantees on the quality of AI-assigned labels, potentially leading to substantial labeling errors.
To address this, we propose \textbf{Conformal Labeling}, a novel and model-agnostic method to identify a subset of unlabeled data with strict false discovery rate (FDR) control.
In particular, we leverage a small labeled calibration set to construct conformal $p$-values for test data and estimate the accuracy of AI models.
Then, we determine a data-dependent threshold for $p$-value selection, which automatically adapts to the estimated accuracy—higher accuracy permits a larger threshold.
We provide theoretical guarantees that Conformal Labeling controls the FDR below the nominal level, ensuring that a predefined fraction of AI-assigned labels is correct in expectation.
Extensive experiments across a wide range of models and datasets demonstrate that our method can achieve the target labeling quality with high power in classification and open-ended generation tasks.
\end{abstract}


\section{Introduction}
Large-scale, high-quality labeled data is crucial for the machine learning pipelines \citep{5206848}. 
While human experts can provide reliable annotations for moderately sized datasets, the rapid growth of modern datasets has rendered this approach prohibitively expensive.
AI models offer a cost-effective alternative by predicting labels.
However, AI models are prone to labeling error \citep{northcutt2021pervasivelabelerrorstest, tan2024large}, with empirical evidence showing that state-of-the-art LLMs exhibit high error in text annotation tasks \citep{baumann2025largelanguagemodelhacking}.
The prediction errors compromise label quality, hindering the deployment of AI labeling in real-world applications.
To balance the trade-off between labeling cost and error, selective labeling has been a promising solution \citep{Li_2023, wang2023unsupervisedselectivelabelingeffective} by combining AI predictions with expert annotations.

Prior work on selective labeling primarily designed heuristic methods \citep{wang2021want, bernhardt2022active, wang2024comprehensive}.
These methods typically rely on model confidence, assigning high-confidence instances to AI models while deferring the remaining instances to human experts.
However, these methods provide no theoretical guarantee on label quality.
When the AI model is inaccurate or the confidence measure is unreliable, a substantial number of incorrect AI-assigned labels will be introduced, leading to a low-quality labeled dataset.
Explicit accuracy guarantees directly address this issue by quantifying and controlling the quality of AI-assigned labels, ensuring that the resulting labeled dataset has a bounded error level pre-defined by users.
This motivates us to design a model-agnostic method with provable accuracy guarantees on the AI-assigned labels.

In this work, we propose \textbf{Conformal Labeling}, a novel method to identify a subset of unlabeled data with provable FDR control.
In particular, we leverage a small labeled calibration set to construct conformal $p$-values for test data and estimate the accuracy of the AI model.
Motivated by our analysis of the conservatism of the Benjamini–Hochberg (BH) procedure \citep{benjamini1995controlling}, we apply the BH procedure at an adjusted significance level derived from the estimated model accuracy, which enables effective FDR control while achieving high selection power.
We provide theoretical guarantees that Conformal Labeling controls the FDR with distribution-free, finite-sample guarantees. 
While the error rate of AI labeling in current methods depends heavily on model performance, Conformal Labeling can control the desired labeling error in expectation regardless of the underlying model’s performance.

We empirically validate our method through extensive experiments conducted on three labeling tasks, including image labeling (ImageNet \citep{5206848}, ImageNet-V2 \citep{recht2019imagenetclassifiersgeneralizeimagenet}), LLM QA (MedMCQA \citep{pal2022medmcqalargescalemultisubject}, MMLU \citep{hendrycks2021measuring}, MMLU-Pro \citep{wang2024comprehensive}), and LLM open-ended generation (MMLU-Redux \citep{gema2025we}, MATH-500 \citep{lightman2023let} MATH-L5 \citep{hendrycks2021measuring}, Zebra-Logic \citep{lin2025zebralogic}, and $\text{HumanEval}^+$ \citep{liu2023your}) tasks.
The results demonstrate that Conformal Labeling achieves high power with controlled FDR, indicating that AI models can label a large proportion of data with high quality.
For example, Conformal Labeling can label 63.73\% of Zebra-Logic with Qwen3-4B-Thinking-2507 \citep{yang2025qwen3technicalreport}, while controlling the FDR at 9.85\%.
In comparison, using AI-assigned labels for the entire dataset results in labeling errors of over 10\%.
Moreover, through comprehensive analyses, we validate that Conformal Labeling is robust to the size of calibration datasets, and the logits-based uncertainty score performs better than the verbalized uncertainty score.

We summarize our contributions as follows:
\begin{itemize}
    \item We propose Conformal Labeling, a novel method for identifying a subset where AI predictions could be provably trusted.
    Regardless of AI models' performance, Conformal Labeling guarantees the quality of AI-assigned labels by strictly controlling the FDR.
    
    \item We theoretically prove that Conformal Labeling provides a strict quality guarantee for AI-assigned labels: it achieves FDR control, ensuring the expected portion of incorrect labels is below a user-specified level.
    
    \item We empirically show that Conformal Labeling can label a large portion of data with AI models while tightly controlling the FDR, through extensive experiments conducted on image labeling, LLM QA, and LLM open-ended generation tasks with various models.
    
\end{itemize}

\section{Preliminaries}

\paragraph{Problem setup.}
We study the problem of identifying a subset where AI predictions can be provably trusted. 
Here, we give a formulation of multi-class classification as an example, while our framework also applies to open-ended generation.
Let $\mathcal{X}$ denote the feature space and $\mathcal{Y}=\{1,\ldots, K\}$ denote the label space. 
The test dataset $\mathcal{D}_{\mathrm{test}} = \{X_{j}\}_{j=1}^m$ consists of $m$ data instances, sampled i.i.d. from a data distribution $\mathcal{P}_{\mathcal{X}}$. 
We denote the unseen ground-truth labels of instance $X_j$ as $Y_j$.
Besides, we consider a pre-trained AI model $f: \mathcal{X} \to \mathbb{R}^{|\mathcal{Y}|}$ used to predict labels for the test dataset.
For a given test instance $X$, the AI model predicts the label with the largest estimated probability $\hat{Y}=\arg\max_{y\in\mathcal{Y}}f_y(X)$, where $f_y(X)$ denotes the estimated class probability for class $y \in \mathcal{Y}$.

Since AI models unavoidably make errors when used for labeling, we aim to select a large subset from the test dataset $\mathcal{D}_{\mathrm{label}}$ to control the portion of incorrect labels. 
Formally, our goal is to identify the largest subset of indices $\mathcal{R}\subseteq\{1,\cdots,m\}$ that controls the FDR, defined as below:
\begin{equation}
\mathrm{FDR} = \mathbb{E}\left[\frac{|\mathcal{R} \cap \mathcal{H}_0|}{\max(|\mathcal{R}|, 1)}\right],
\end{equation}
where $\mathcal{H}_0=\{j\in\{1,\cdots,m\}: Y_{j}\neq\hat{Y}_{j}\}$ is the set of indices with incorrect predictions, and the expectation is taken over the randomness of the data.
For notation shorthand, we denote $[m]=\{1,\cdots,m\}$ in the following. 
The FDR metric measures the expected proportion of mislabeled samples within the AI-labeled subset, illustrating the quality of AI-assigned labels by explicitly bounding the expected fraction of incorrect labels. \footnote{The realized FDR might exceed the target level for a single run.}

In addition to FDR control, we also expect AI models to label as many test instances as possible correctly, which corresponds to maximizing power:
\begin{equation}\label{eq:power}
    \mathrm{Power} = \mathbb{E}\left[\frac{|\mathcal{R} \cap \mathcal{H}_1|}{\max(|\mathcal{H}_1|, 1)}\right],
\end{equation}
where $\mathcal{H}_1 = \{j\in[m]: Y_{j}=\hat{Y}_{j}\}$ is the set of indices where the AI prediction is correct.
It should be emphasized that our method prioritizes FDR control over power, in that we strictly enforce FDR$\leq \alpha$ to guarantee label quality while optimizing power under the constraints.

In this work, we assume access to a small labeled calibration dataset $\mathcal{D}_{\mathrm{cal}} = \{(X_i, Y_i)\}_{i=1}^n$. For convenience, we denote the test dataset as $\mathcal{D}_{\mathrm{test}} = \{(X_j, Y_j)\}_{j=n+1}^{n+m}$, where $Y_j$ is not observed.
Since the labeling cost of a small dataset by human annotators is typically affordable, this assumption is practical in the real world and is also adopted in prior work \citep{candes2024probablyapproximately}.
Besides, we assume that examples of the test and calibration datasets are both drawn i.i.d. from the joint distribution $\mathcal{P}_{\mathcal{X}\mathcal{Y}}$.
This assumption is realistic in our setting because we sample the calibration set from a large unlabeled dataset and leave the remaining data for testing.
The process ensures that calibration data and test data are naturally i.i.d.
In Appendix \ref{subsection:distributionshift}, we test our proposed method against distribution shift to show that our proposed method is empirically robust to distribution shift.

\paragraph{Selective labeling methods and their limitations.}
Ensuring high-quality labels while reducing annotation costs has motivated extensive research on selective labeling.
Prior work on selective labeling primarily focused on heuristic methods. 
For example, several works explore human-LLM collaboration for annotation, including uncertainty-guided work allocation~\citep{Li_2023}, verifier-based quality assessment that routes low-confidence samples to humans~\citep{wang2024human}, and unsupervised collaboration between LLMs and small models~\citep{xiao2023freeal}.

These methods balance labeling cost and quality by empirically selecting confidence thresholds to route samples between AI models and human annotators, yet provide no theoretical guarantees on the reliability of the AI-assigned labels. 
These methods implicitly assume well-calibrated confidence scores and sufficiently strong underlying model performance.
However, commonly used confidence measures—including softmax probabilities from deep neural networks, logit-based confidence scores from LLMs, and LLM-generated verbalized confidence—are known to be poorly calibrated and overconfident \citep{guo2017calibration, chhikara2025mind}, making them unreliable.
Moreover, when the underlying model exhibits limited accuracy, even predictions assigned high confidence may be incorrect.
Consequently, when the models are inaccurate or the confidence measure is unreliable, these methods will result in substantial labeling errors.
These limitations motivate us to explore model-agnostic methods that can provably guarantee the quality of AI-assigned labels in selective labeling.

\section{Method}
\label{sec:method}

\subsection{Conformal Labeling}
Our previous section shows that existing methods cannot guarantee the quality of AI-assigned labels.
To address this, we propose \textbf{Conformal Labeling}, a model-agnostic method to identify a subset where AI predictions can be provably trusted by controlling the FDR.
Our method is composed of three primary steps: quantifying uncertainty, constructing conformal $p$-values, and thresholding.

\paragraph{Uncertainty quantification.}
Our method builds on a key insight: we should select instances where the model exhibits high confidence in its predictions.
To quantify the model confidence, we define an uncertainty (non-conformity) score $\mathcal{S}: \mathcal{X}\to\mathbb{R}$, where a higher value indicates greater model uncertainty.
For example, we employ $\mathcal{S}(X)=1-\max_{y\in\mathcal{Y}}f_y(X)$ as our uncertainty score function \citep{hendrycks2016baseline}.
This score is a valid measure of uncertainty, since prior works establish that misclassified samples generally receive lower probability scores (i.e., $\max_{y\in\mathcal{Y}}f_y(X)$) than correctly classified ones \citep{hendrycks2016baseline, tu2024does}.

\paragraph{Statistical guarantee via conformal $p$-value.}
To provide a statistical guarantee, we reformulate our problem as the following multiple hypothesis testing problem:
\[
H_j^0: \hat{Y}_{n+j} \text{  is incorrect}
\quad\text{v.s.}\quad
H_j^1: \hat{Y}_{n+j} \text{ is correct}.
\]

The correctness of AI-assigned labels depends on the task definition.
For image labeling tasks, correctness is defined as $\hat{Y}_{n+j} = Y_{n+j}$.
For open-ended verifiable tasks, correctness corresponds to the generated output being a correct solution.
For alignment tasks, correctness means that the generated output aligns with human preferences.

In the formulation, it is worth noting that rejecting the null hypothesis $H_j^0$ indicates that $(X_{n+j},\hat{Y}_{n+j})$ should be included in the selection subset, as $\hat{Y}_{n+j}$ is deemed to be correct.
To construct the selection subset, we employ \textit{conformal p-value}, which builds upon the conformal inference framework \citep{vovk1999machine, vovk2005algorithmic}.
The underlying intuition is that: \textit{a test instance $X$ is likely to be misclassified if its uncertainty score $\mathcal{S}(X)$ is generally larger than the scores of instances that are known to be misclassified}.
Leveraging this idea, conformal $p$-value is computed through a rank-based comparison of $\mathcal{S}(X)$ against uncertainty scores of misclassified instances.

Formally, for the calibration dataset $\mathcal{D}_{cal}$, we identify the subset $\mathcal{D}_{cal}^0\subseteq\mathcal{D}_{cal}$ where instances are misclassified by the AI model.
For simplicity, we denote $\mathcal{D}_{cal}^0=\{(X_i,Y_i)\}_{i=1}^{n_0}$, and thus $\hat{Y}_i$ is incorrect for $i=1,\cdots,n_0$.
We compute the uncertainty scores for the entire dataset $\{(X_i,Y_i)\}_{i=1}^{n+m}$: $\mathcal{S}_i=1-\max_{y\in\mathcal{Y}}f_y(X_i)$.
Then, the conformal $p$-value for the instance $X_{n+j}$ is computed by
\begin{equation}
\label{eq:conformal_p}
\hat{p}_j = 
\frac{
    \sum_{i=1}^{n_0} \mathbf{1}\{\mathcal{S}_i < \mathcal{S}_{n+j}\}
    +
    U_j\left(1 + \sum_{i=1}^{n_0} \mathbf{1}\{\mathcal{S}_i = \mathcal{S}_{n+j}\}\right)}{n_0+1},
\end{equation}
where $U_j\sim\mathrm{Uniform}[0,1]$ are i.i.d. uniform random variable to braek ties.
Here, $\hat{p}_j$ quantifies how extreme the uncertainty of $X_{n+j}$ is compared to the scores of mislabeled instances, with a small $\hat{p}_j$ providing strong statistical evidence for correct prediction.



The conformal $p$-value defined in \eqref{eq:conformal_p} has a distribution-free finite-sample guarantee, which is formalized in the following proposition.
The conservatism property indicates that the conformal $p$-values for misclassified instances are biased to be high as they stochastically dominate the uniform distribution on $[0,1]$.
This allows us to set a threshold to flag potential correct instances, while controlling the FDR.

\begin{proposition}(from \citet{vovk2005algorithmic}). \label{prop:conservative}
    Given that the calibration sample $\{(X_i, Y_i)\}_{i=1}^n$ together with the $j$-th test sample $(X_{n+j}, Y_{n+j})$ are i.i.d. for $j \in \{1, \dots, m\}$,  conformal $p$-value $\hat{p}_j$ defined in \eqref{eq:conformal_p} is conservative: 
    \begin{align}
\label{eq:conservative}
        \mathbb{P}\{\hat{p}_j \leq \alpha \mid H_j^0 \text{ is true}\} \leq \alpha, \quad \textnormal{for all } \alpha \in (0,1).
    \end{align}
\end{proposition}

\paragraph{Thresholding.} 
After obtaining conformal $p$-values for test instances, we can naively apply the Benjamini–Hochberg (BH) procedure \citep{benjamini1995controlling} as in conformal selection \citep{jin2023selection} to select a subset $\mathcal{R}$ for AI labeling while controlling the FDR at level $\alpha$.
Concretely, let $p_{(1)} \leq \ldots \leq p_{(m)}$ denote the ordered statistics of the $p$-values; the rejection set of the BH procedure applied
 to the conformal $p$-values is $\mathcal{R} = \{ j : \hat{p}_j \le \hat{p}_{(j^*)} \}$, where 
 \begin{equation}
 j^* = \max \left\{ j : \hat{p}_{(j)} \le \frac{\alpha j}{m} \right\},     
 \end{equation}
 with the convention that max$\varnothing$ = 0.
Under the i.i.d. data assumption, applying the BH procedure to these $p$-values controls FDR below $\pi_0\alpha$ \citep{benjamini2001control, bates2023testing}, where $\pi_0$ is the proportion of incorrect predictions in the test dataset.
While this result ensures valid FDR control, it also implies that the effective FDR level of the BH procedure is conservative by a factor of $\pi_0$. 
When the deployed AI model is often accurate, $\pi_0$ is small.
As a result, the BH procedure controls the FDR at a level far below the nominal target $\alpha$, leading to overly conservative thresholding and, consequently, a reduced number of instances selected for labeling.

To mitigate this conservatism issue, we estimate $\pi_0$ from the calibration dataset, building upon the intuition that when the calibration data and test data are i.i.d., the model should achieve comparable accuracy on both datasets.
In particular, we estimate $\pi_0$ with $\frac{1+n_0}{1+n}$, and subsequently apply the BH procedure at level $\frac{1+n}{1+n_0}\alpha$.
We summarize the complete procedure of Conformal Labeling in Algorithm~\ref{alg:CL_conformal}, which combines all three steps described above.

\begin{algorithm}[t!]
\caption{Conformal Labeling}
\label{alg:CL_conformal}
\begin{algorithmic}[1]
\REQUIRE Calibration set $\mathcal{D}_{\mathrm{cal}} = \{(X_i, Y_i)\}_{i=1}^n$, unlabeled test instances $\{X_{n+j}\}_{j=1}^m$, pre-trained model $f$, target FDR level $\alpha \in (0,1)$
\ENSURE A selected subset $\mathcal{R}$ with FDR control

\STATE \textit{// Identify mislabeled calibration data}
\STATE Identify $\mathcal{D}^0_{\mathrm{cal}} \subseteq \mathcal{D}_{\mathrm{cal}}$ with size $n_0$

\STATE \textit{// Step 1: Compute uncertainty scores}
\FOR{$i = 1$ to $n+m$}
    \STATE Compute uncertainty score $\mathcal{S}_i$
\ENDFOR

\STATE \textit{// Step 2: Construct conformal $p$-values}
\FOR{$j = 1$ to $m$}
    \STATE Construct conformal $p$-value $\hat{p}_j$ according to Eq.~\eqref{eq:conformal_p}
\ENDFOR

\STATE \textit{// Step 3: Thresholding}
\STATE Compute $j^* = \max \left\{ j : \hat{p}_{(j)} \le \frac{\alpha j(n+1)}{m(n_0+1)} \right\}$, where $\hat{p}_{(j)}$ is the $j$-th smallest $p$-value.

\STATE \textbf{Return} $\mathcal{R} = \{ j : \hat{p}_j \le \hat{p}_{(j^*)} \}$
\end{algorithmic}
\end{algorithm}

In Theorem \ref{thm:CL_fdr_iid}, we establish that Conformal Labeling controls the FDR below the target level $\alpha$ even when it applies the BH procedure at level $\frac{1+n}{1+n_0}\alpha$.
This result provides a formal guarantee that the AI-assigned labels within the selected subset are trustworthy, in the sense that the expected proportion of incorrect labels is bounded by $\alpha$.
Because the BH procedure is applied at an elevated level, Conformal Labeling selects more samples than naively applying the BH procedure at level $\alpha$.
Moreover, Conformal Labeling's upper bound for FDR is tight: it converges to $\alpha$ as the calibration set size increases and equals $\alpha$ in the asymptotic limit.
The proof is provided in Appendix \ref{subsection:proof of CL}.

\begin{theorem}\label{thm:CL_fdr_iid}
Suppose the calibration samples $(X_i, Y_i)_{i=1}^n$ and test samples $(X_{n+j}, Y_{n+j})_{j=1}^m$ are i.i.d. Let $\alpha \in (0, 1)$ be the target FDR level, and suppose the selection set $\mathcal{R}$ is determined by Algorithm \ref{alg:CL_conformal} applied at the target FDR level $\alpha$. Define $p = \mathbb{E}[H_j^0]$, the probability that a test sample $(X_{n+j}, Y_{n+j})$ is incorrectly predicted. Taking expectation over the randomness of the calibration and test data, the FDR of the selection set $\mathcal{R}$ satisfies:
\[
\mathrm{FDR} \leq \left[1 - (1 - p)^{n+1}\right] \alpha \leq \alpha.
\]
\end{theorem}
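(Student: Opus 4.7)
The plan is to adapt the Benjamini--Hochberg FDR control argument to the conformal setting of Algorithm~\ref{alg:CL_conformal} by conditioning on the random denominator $n_0$, applying a standard BH bound at an inflated level, and then marginalizing via an elementary binomial identity. The argument proceeds in three stages.

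\textbf{Stage 1 (Conditional super-uniformity).} For a null test index $j$ (i.e., with $H_j^0$ true), the $n_0$ mis-predicted calibration scores together with $\mathcal{S}_{n+j}$ are i.i.d.\ draws from the conditional distribution of $\mathcal{S}(X)$ given mis-prediction, hence exchangeable. Combined with the independent uniform randomization $U_j$, the rank of $\mathcal{S}_{n+j}$ among these $n_0+1$ scores is uniform on $\{1,\ldots,n_0+1\}$, yielding
\begin{align*}
\hat p_j \mid H_j^0, n_0 \;\sim\; \mathrm{Uniform}[0,1].
\end{align*}

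\textbf{Stage 2 (Conditional BH bound).} Algorithm~\ref{alg:CL_conformal} is equivalent, conditional on $n_0$, to running standard BH on $\{\hat p_j\}_{j=1}^m$ at the inflated level $\alpha(n+1)/(n_0+1)$. Since the conformal p-values share only the calibration set, they satisfy the PRDS-on-nulls property standard in conformal selection, so the classical BH FDR bound for PRDS p-values applies. Together with Stage 1, this gives, conditional on $n_0$ and the set of true nulls $\mathcal{H}_0=\{j:H_j^0\text{ true}\}$,
\begin{align*}
\mathbb{E}\!\left[\frac{|\mathcal{R}\cap\mathcal{H}_0|}{|\mathcal{R}|\vee 1}\,\bigg|\,n_0,\mathcal{H}_0\right] \;\leq\; \frac{\alpha(n+1)\,|\mathcal{H}_0|}{(n_0+1)\,m}.
\end{align*}
Taking expectation over $\mathcal{H}_0$ and using the independence of test and calibration samples (so $\mathbb{E}[|\mathcal{H}_0|\mid n_0]=mp$) yields
\begin{align*}
\mathbb{E}\!\left[\frac{|\mathcal{R}\cap\mathcal{H}_0|}{|\mathcal{R}|\vee 1}\,\bigg|\,n_0\right] \;\leq\; \frac{\alpha(n+1)p}{n_0+1}.
\end{align*}

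\textbf{Stage 3 (Marginalization over $n_0$).} Since $n_0\sim\mathrm{Binomial}(n,p)$, the classical identity
\begin{align*}
\mathbb{E}\!\left[\frac{1}{n_0+1}\right] \;=\; \frac{1-(1-p)^{n+1}}{(n+1)p}
\end{align*}
holds (by rewriting $\binom{n}{k}/(k+1)=\binom{n+1}{k+1}/(n+1)$ and summing a binomial expansion), so taking the outer expectation cancels the $(n+1)p$ factor and gives $\mathrm{FDR}\le\alpha[1-(1-p)^{n+1}]\le\alpha$.

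The main obstacle is Stage 2: rigorously justifying the BH FDR guarantee in the presence of the data-dependent inflated threshold $\alpha(n+1)/(n_0+1)$ that is shared across all p-values and is itself random through the calibration set. Once the standard conformal PRDS-on-nulls structure is verified in this setup and the conditional exchangeability of Stage 1 is in place, the tightening factor $1-(1-p)^{n+1}$ falls out of the elementary binomial moment identity above.
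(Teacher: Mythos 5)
Your proposal is correct and follows essentially the same route as the paper: condition on the number $n_0$ of mispredicted calibration points, establish a conditional BH-type bound of the form $\frac{n+1}{n_0+1}\cdot\frac{m_0}{m}\alpha$ (you use $\mathbb{E}[|\mathcal{H}_0|\mid n_0]=mp$ where the paper conditions on $m_0\sim\mathrm{Binomial}(m,p)$ directly, which is the same thing), and then marginalize over $n_0\sim\mathrm{Binomial}(n,p)$ via the identity $p(n+1)\,\mathbb{E}[1/(n_0+1)]=1-(1-p)^{n+1}$. The only substantive difference is in justifying the conditional BH step: you invoke the PRDS-on-nulls property of conformal $p$-values sharing a calibration set (the Bates et al.\ route), whereas the paper's lemma bounds $\mathbb{E}\bigl[\sum_j \mathbf{1}\{\hat p_j\le \cdot\,j^*\}\mid j^*\bigr]$ directly by super-uniformity conditional on $j^*$ — a step that, as you correctly flag, is the delicate point in either treatment.
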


\begin{figure*}[t!]
    \centering
    \begin{subfigure}[b]{0.24\textwidth}
        \centering
        \includegraphics[width=0.9\textwidth]{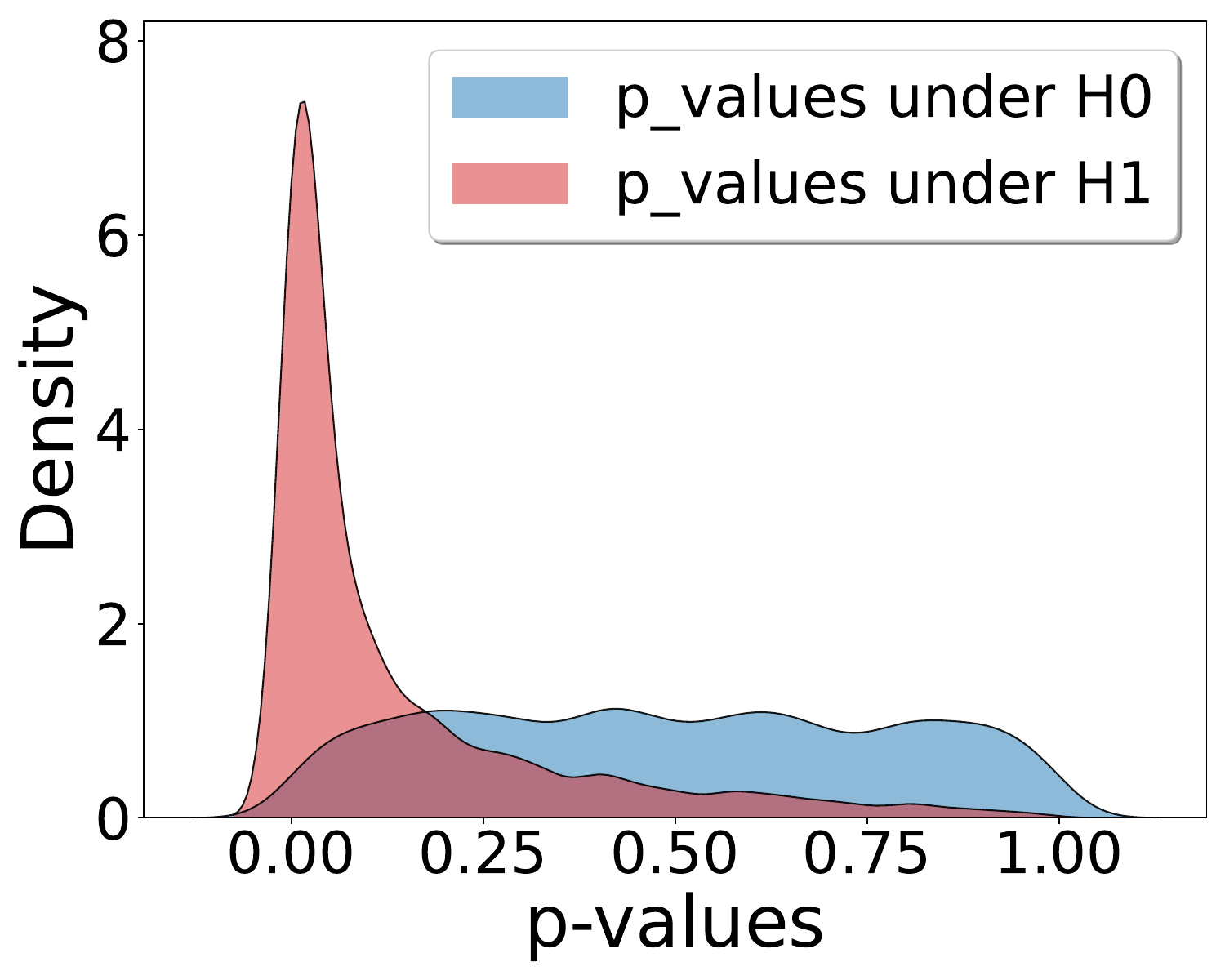}
        \caption{MSP}
        \label{fig:sub1}
    \end{subfigure}
    \hfill
    \begin{subfigure}[b]{0.24\textwidth}
        \centering
        \includegraphics[width=0.9\textwidth]{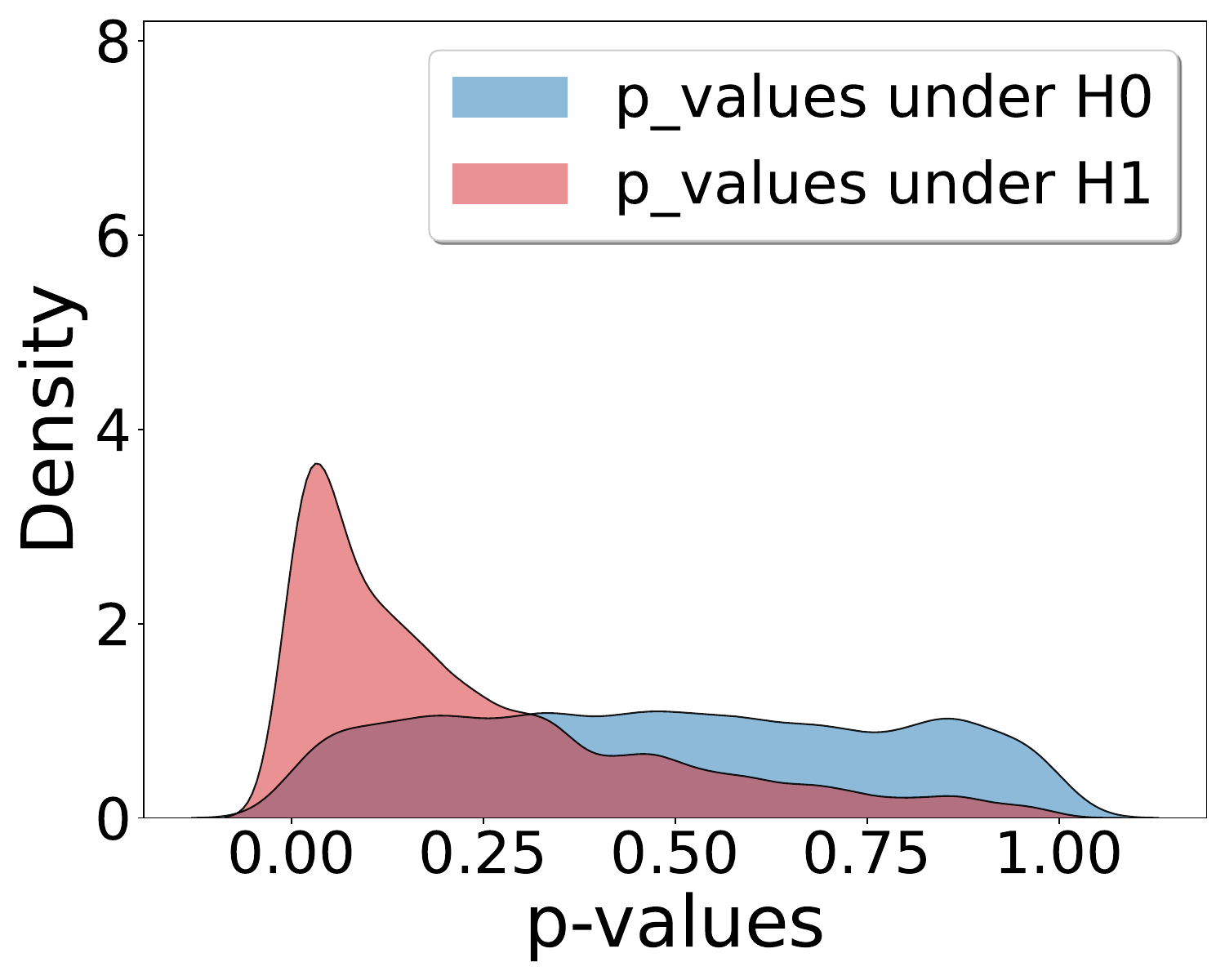}
        \caption{Energy}
        \label{fig:sub2}
    \end{subfigure}
    \hfill
    \begin{subfigure}[b]{0.24\textwidth}
        \centering
        \includegraphics[width=0.9\textwidth]{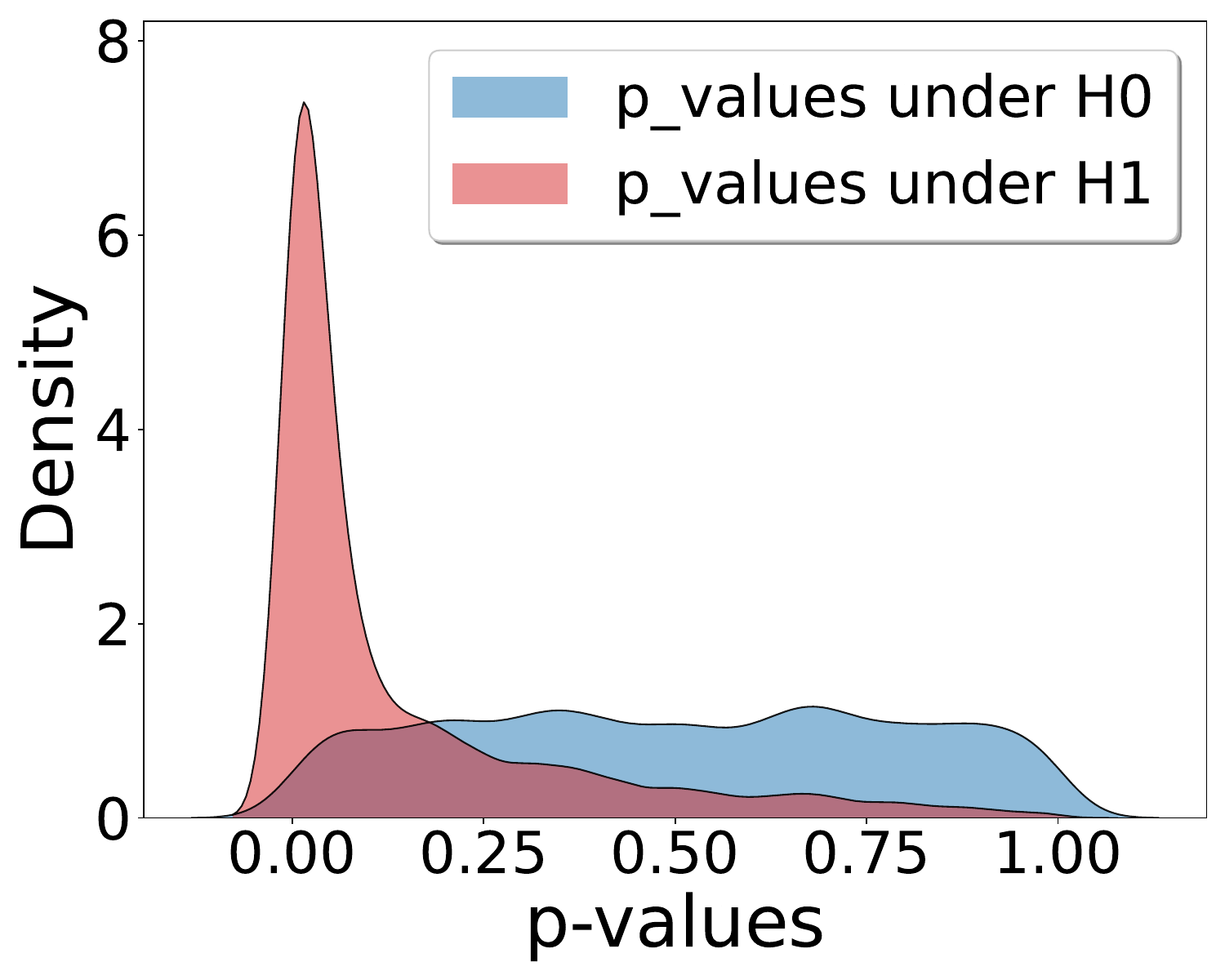}
        \caption{$D_{\alpha}$}
        \label{fig:sub3}
    \end{subfigure}
    \hfill
    \begin{subfigure}[b]{0.24\textwidth}
        \centering
        \includegraphics[width=0.9\textwidth]{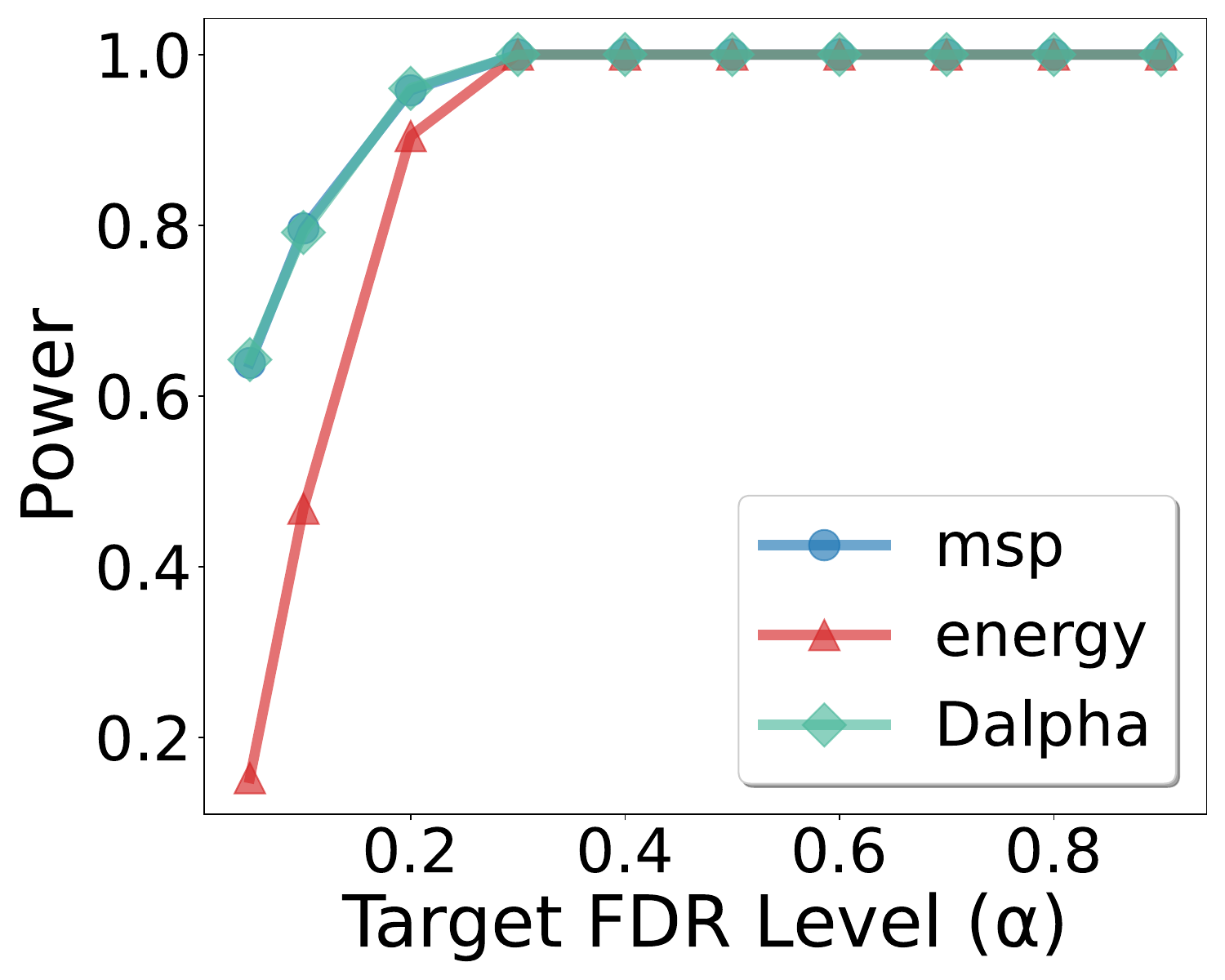}
        \caption{Power comparison}
        \label{fig:sub4}
    \end{subfigure}
    \caption{\textbf{Empirical distributions and power (employed with our method) of conformal $p$-values under different uncertainty scores.} The experiments include maximum softmax probability (MSP), energy, and DOCTOR-$\alpha$ score ($D_{\alpha}$). The experiments are conducted on ImageNet with ResNet-34.
    The results show that both MSP and $D_{\alpha}$ score create a clear distinction between correct and incorrect predictions, thus achieving high statistical power.
    However, the energy score fails to provide this separation, leading to low power.
    }
    \label{fig:score_comparisons}
    \vspace{-6pt}
\end{figure*}

\subsection{Choice of uncertainty score}

In the above analysis, we establish that Conformal Labeling controls the FDR below the desired level.
However, this guarantee alone is insufficient: a trivial procedure that simply labels nothing would achieve a perfect FDR of 0, yet offer no practical value.
This highlights the need to also evaluate the method's statistical power, which measures the method's ability to identify as many correctly labeled instances as possible (see Eq.~(\ref{eq:power})).

As shown in prior work \citep{jin2023selection, gui2024conformal, bai2025multivariate}, the statistical power of this method depends on the quality of the uncertainty score.
In particular, a score that better separates correct from incorrect predictions directly increases statistical power (see Proposition 7 of \citet{jin2023selection}).
To deliver practical recommendations, we empirically compare several uncertainty scores by visualizing their resulting $p$-value distributions and measuring the final statistical power employed with our method in Figure~\ref{fig:score_comparisons}.
We utilize a pre-trained ResNet-34 model on the ImageNet dataset, with three uncertainty scores: maximum softmax probability (MSP) \citep{hendrycks2016baseline}, energy score \citep{liu2020energy}, and DOCTOR-$\alpha$ score ($D_{\alpha}$) \citep{granese2021doctor}.
We give an overview of these score functions in Appendix \ref{section:score}.
The results show that both MSP and $D_{\alpha}$ score provide a clear distinction between correct and incorrect predictions, thus achieving high statistical power.
However, the energy score fails to provide this separation, leading to low power.
Given the comparable power performance of MSP and $D_{\alpha}$ score, we will use the more computationally efficient MSP in our main experiments. 
In Appendix \ref{subsection:scorepower}, we provide a detailed comparison of the power and FDR achieved by these score functions.

\section{Experiments}
 In this section, we evaluate the effectiveness of Conformal Labeling on image labeling, LLM QA, and LLM open-ended generation tasks with various models.
 We find that it achieves tight FDR control and high power, indicating that AI models can label a large proportion of data with bounded error.
 In addition, we conduct additional analyses to provide practical guidance for applying our method.
 
\subsection{Experimental setup}
\label{sec:setup}

\paragraph{Datasets.}
We conduct experiments on image labeling, LLM QA, and LLM open-ended generation tasks.
For image labeling, we use ImageNet \citep{5206848} and ImageNet-V2 \citep{recht2019imagenetclassifiersgeneralizeimagenet}.
For LLM QA task, we use MedMCQA \citep{pal2022medmcqalargescalemultisubject}, MMLU \citep{hendrycks2021measuring}, and MMLU-Pro \citep{wang2024mmlu-pro}.
For LLM open-ended generation task, we use MMLU-Redux \citep{gema2025we}, MATH-500 \citep{lightman2023let}, MATH-L5 \citep{hendrycks2021measuring}, Zebra-Logic \citep{lin2025zebralogic}, and $\text{HumanEval}^+$ \citep{liu2023your} to evaluate Conformal Labeling's performance in general, math reasoning, text reasoning and coding tasks respectively.
We also show how to extend our method to regression tasks in Appendix \ref{section:regression}.

\begin{table*}[t!]
\centering
\caption{\textbf{Annotation performance of Conformal Labeling on image Labeling and QA tasks.}
We compare Conformal Labeling against three baselines:
(1) selection with guaranteed risk control (SGR), (2) FDR search heuristic method, and (3) labeling the entire dataset with AI models.
We set $\alpha=0.1$ for Conformal Labeling and FDR search, and $\alpha=0.1$, $\epsilon=0.2$ for SGR.
\textbf{Bold} numbers are superior results.}

\label{tab:results_image_qa}
\footnotesize
\setlength{\tabcolsep}{3pt}
\scriptsize
\renewcommand{\arraystretch}{1}

\begin{tabular}{c c c *{9}{c} c}
\toprule
\multirow{2}{*}{Task} & \multirow{2}{*}{Dataset} & \multirow{2}{*}{Model} 
& \multicolumn{3}{c}{Conformal Labeling} 
& \multicolumn{3}{c}{SGR}
& \multicolumn{3}{c}{FDR search} 
& \multicolumn{1}{c}{AI only} \\
\cmidrule(lr){4-6} \cmidrule(lr){7-9} \cmidrule(lr){10-12} \cmidrule(lr){13-13}
 & & 
 & FDR (\%) & Power (\%) & Ratio (\%) 
 & FDR (\%) & Power (\%) & Ratio (\%) 
 & FDR (\%) & Power (\%) & Ratio (\%) 
 & Error (\%) \\
\midrule

\multirow{6}{*}{Image} 
& \multirow{3}{*}{ImageNet} 
& ResNet-34   
& 9.97 & 79.89 & \textbf{65.04} 
& 8.91 & 77.31 & 62.21 
& 9.85 & 79.63 & 64.75 
& 26.71 \\
& & DenseNet-161  
& 9.92 & 84.89 & \textbf{72.68} 
& 8.92 & 82.68 & 70.00 
& 9.83 & 84.71 & 72.45 
& 22.89 \\
& & CLIP-VIT-B/32   
& 9.84 & 45.63 & \textbf{30.21} 
& 8.17 & 39.66 & 25.79 
& 9.77 & 45.41 & 30.04
& 40.35 \\
\cmidrule(lr){2-13}
& \multirow{3}{*}{ImageNet-V2} 
& ResNet-34   
& 9.92 & 61.78 & \textbf{41.85} 
& 6.69 & 52.20 & 34.19 
& 9.78 & 61.51 & 41.62 
& 39.04 \\
& & DenseNet-161  
& 9.98 & 67.07 & \textbf{48.57} 
& 6.82 & 54.18 & 37.98 
& 9.84 & 66.67 & 48.21 
& 34.88 \\
& & CLIP-VIT-B/32   
& 9.99 & 35.52 & \textbf{20.73} 
& 8.17 & 6.59 & 3.67 
& 9.51 & 33.61 & 19.55 
& 47.78 \\
\midrule

\multirow{6}{*}{QA}
& \multirow{3}{*}{MedMCQA} 
& Llama-3.1-8B-Instruct  
& 9.80 & 31.72 & \textbf{21.20} 
& 8.54 & 2.03 & 1.27 
& 9.39 & 30.41 & 20.27 
& 40.35 \\
& & Qwen3-32B    
& 9.81 & 49.85 & \textbf{37.05} 
& 8.08 & 2.94 & 2.09 
& 9.55 & 48.60 & 36.05 
& 33.44 \\
& & Llama-3.1-70B-Instruct   
& 9.86 & 69.37 & 54.83
& 6.43 & 44.34 & 33.67 
& 10.58 & 71.15 & \textbf{56.67} 
& 28.90 \\
\cmidrule(lr){2-13}
& \multirow{3}{*}{MMLU} 
& Llama-3.1-8B-Instruct  
& 9.85 & 57.13 & 40.64
& 7.00 & 48.00 & 33.13 
& 58.94 & 76.77 & \textbf{68.74} 
& 35.91 \\
& & Qwen3-32B    
& 10.00 & 82.95 & \textbf{72.44} 
& 8.18 & 78.14 & 66.88 
& 9.84 & 82.56 & 71.98 
& 21.43 \\
& & Llama-3.1-70B-Instruct   
& 9.97 & 88.25 & \textbf{80.17} 
& 8.21 & 84.00 & 74.85 
& 9.90 & 88.10 & 79.98 
& 18.24 \\
\bottomrule
\end{tabular}
\end{table*}

\begin{table*}[t!]
\centering
\caption{\textbf{Annotation performance of Conformal Labeling on open-ended generation tasks.}
We use the logits-based uncertainty score.
We set $\alpha=0.05$ for Conformal Labeling and FDR search, and $\alpha=0.05$, $\epsilon=0.2$ for SGR.
\textbf{Bold} numbers indicate superior results.}

\label{tab:results_oe}
\footnotesize
\setlength{\tabcolsep}{3pt}
\scriptsize
\renewcommand{\arraystretch}{1}

\begin{tabular}{c c *{9}{c} c}
\toprule
\multirow{2}{*}{Dataset} & \multirow{2}{*}{Model} 
& \multicolumn{3}{c}{Conformal Labeling} 
& \multicolumn{3}{c}{SGR}
& \multicolumn{3}{c}{FDR search} 
& \multicolumn{1}{c}{AI only} \\
\cmidrule(lr){3-5} \cmidrule(lr){6-8} \cmidrule(lr){9-11} \cmidrule(lr){12-12}
 & 
 & FDR (\%) & Power (\%) & Ratio (\%) 
 & FDR (\%) & Power (\%) & Ratio (\%) 
 & FDR (\%) & Power (\%) & Ratio (\%) 
 & Error (\%) \\
\midrule

\multirow{3}{*}{MMLU-Redux} 
& Qwen3-4B-Instruct-2507      
& 4.84 & 33.62 & \textbf{28.46} & 43.17 & 9.06 & 7.71 & 4.22 & 29.87 & 25.11 & 20.17 \\
& Qwen3-4B-Thinking-2507    
& 4.95 & 24.64 & \textbf{21.46} & 42.56 & 4.08 & 3.61 & 5.85 & 23.77 & 20.62 & 20.79 \\
& DeepSeek-R1-Distill-Qwen-32B  
& 4.92 & 29.19 & \textbf{26.34} & 51.35 & 6.89 & 6.30 & 5.80 & 27.48 & 24.70 & 15.07 \\
\cmidrule(lr){1-12}

\multirow{3}{*}{MATH500} 
& Qwen3-4B-Instruct-2507      
& 4.93 & 86.73 & \textbf{83.88} & 3.18 & 52.83 & 50.72 & 4.88 & 84.94 & 82.10 & 8.20 \\
& Qwen3-4B-Thinking-2507    
& 4.07 & 99.80 & \textbf{99.78} & 2.50 & 57.25 & 57.11 & 4.01 & 99.69 & 99.67 & 4.00 \\
& DeepSeek-R1-Distill-Qwen-32B  
& 4.18 & 8.28 & 8.41 & 38.70 & 0.31 & 0.61 & 29.83 & 10.32 & \textbf{10.63} & 7.00 \\
\cmidrule(lr){1-12}

\multirow{3}{*}{MATH-L5} 
& Qwen3-4B-Instruct-2507      
& 4.84 & 69.54 & \textbf{66.90} & 0.55 & 2.90 & 2.85 & 5.14 & 68.89 & 66.25 & 8.88 \\
& Qwen3-4B-Thinking-2507    
& 4.64 & 48.33 & 48.45 & 0.35 & 1.96 & 1.95 & 4.72 & 57.85 & \textbf{57.94} & 5.27 \\
& DeepSeek-R1-Distill-Qwen-32B  
& 4.69 & 65.31 & 64.69 & 1.42 & 13.86 & 13.71 & 6.10 & 68.66 & \textbf{68.04} & 6.38 \\
\cmidrule(lr){1-12}

\multirow{3}{*}{Zebra-Logic} 
& Qwen3-4B-Instruct-2507      
& 4.76 & 47.80 & \textbf{39.94} & 0.75 & 6.52 & 5.46 & 4.23 & 42.36 & 35.18 & 20.90 \\
& Qwen3-4B-Thinking-2507    
& 4.95 & 67.83 & \textbf{63.73} & 2.45 & 32.11 & 30.02 & 3.05 & 50.24 & 46.28 & 11.00 \\
& DeepSeek-R1-Distill-Qwen-32B  
& 4.78 & 0.01 & 0.02 & 69.33 & 0.03 & 0.03 & 67.70 & 0.05 & \textbf{0.32} & 31.40 \\
\cmidrule(lr){1-12}

\multirow{3}{*}{HumanEval+} 
& Qwen3-4B-Instruct-2507      
& 4.76 & 72.46 & 71.06 & 0.44 & 5.60 & 5.38 & 5.09 & 81.86 & \textbf{80.08} & 7.32 \\
& Qwen3-4B-Thinking-2507    
& 2.44 & 100.00 & \textbf{100.00} & 1.61 & 14.73 & 14.79 & 2.44 & 100.00 & 100.00 & 2.44 \\
& DeepSeek-R1-Distill-Qwen-32B  
& 0.61 & 100.00 & \textbf{100.00} & 0.64 & 39.48 & 39.49 & 0.62 & 100.00 & 100.00 & 0.61 \\

\bottomrule
\end{tabular}
\end{table*}

\paragraph{Models.}
We conduct extensive experiments on various open-sourced AI models.
For image labeling,  we use ResNet-34 \citep{he2015deepresiduallearningimage}, DenseNet-161 \citep{huang2018denselyconnectedconvolutionalnetworks}, and a Vision-Language Model CLIP \citep{radford2021learningtransferablevisualmodels}, which is based on a pre-trained ViT-B/32 \citep{dosovitskiy2021imageworth16x16words}.
For LLM QA tasks, we employ Llama-3.1-8B-Instruct \citep{dubey2024llama}, Qwen3-32B \citep{yang2025qwen3technicalreport}, and Llama-3.1-70B-Instruct.
For LLM open-ended generation tasks, we use Qwen3-4B-Instruct-2507, Qwen3-4B-Thinking-2507, and DeepSeek-R1-Distill-Qwen-32B \citep{guo2025deepseek}.
The specific sampling temperature and other hyperparameters for LLMs are detailed in Appendix \ref{section:moreimplementation}.
While the theoretical guarantee of Conformal Labeling is model-agnostic, we show in Appendix~\ref{subsec:model_accuracy} that higher AI model accuracy leads to a larger selection size.

\paragraph{Uncertainty Scores.}
For the image labeling task, we use the MSP.
For the LLM QA task, we also adopt MSP, where the estimated probability of each option is obtained by applying a softmax over the logits corresponding to the option tokens.
For the LLM open-ended generation task, we adopt two complementary score functions that serve as analogues of MSP:
a logits-based score for the white-box case and a verbalized score for the black-box case.
We provide details for these score functions in Appendix \ref{sec:oe_score}.

\paragraph{Baselines and evaluation metrics.}
We evaluate Conformal Labeling against four baseline methods:
(1) an FDR-search heuristic baseline;
(2) selection with guaranteed risk control (SGR) \citep{geifman2017selective}; 
(3) labeling test instances with AI models when the uncertainty score satisfies $\mathcal{S}_{n+j} \leq \alpha$;
and
(4) applying AI predictions to the entire test dataset.
We provide a detailed introduction to these baseline methods in Appendix \ref{sec:baselines}.
We compare Conformal Labeling's selection procedure against BH \citep{benjamini1995controlling}, Storey-BH \citep{storey2002direct}, and Quantile-BH procedures \citep{benjamini2006adaptive}.
We evaluate the performance of selective labeling using the following metrics: (1) FDR; (2) Power; (3) AI-labeled ratio, the proportion of data in the test dataset that is labeled by AI models.

\paragraph{Implementation details.}
Each experiment is repeated 1000 times, and we report the mean FDR, power, and AI-labeled ratio.
More details of implementation are provided in Appendix \ref{section:moreimplementation}. 
We provide the code for reproducing our main experiments in this \href{https://anonymous.4open.science/r/iclr_selective_labeling-7325}{anonymous repository}.

\subsection{Results}

\paragraph{Conformal Labeling achieves tight FDR control with high power.}
Table \ref{tab:results_image_qa} reports the annotation performance of Conformal Labeling on image labeling and LLM QA tasks.
Across all the benchmarks and model architectures, Conformal Labeling tightly controls the FDR while labeling a substantial proportion of data with AI models.
For example, using Llama-3.1-70B-Instruct on MedMCQA, Conformal Labeling labels 54.83\% of test instances with the AI model while controlling the FDR at 9.86\%.
In comparison, the FDR search baseline fails to control FDR, while SGR only labels 33.67\% of test instances.   
Conformal Labeling improves the AI-labeled ratio by over 20\% compared to SGR, while controlling the FDR.
The tightest FDR control and highest power are also consistently observed in other datasets and models.
Overall, Conformal Labeling achieves the best cost-accuracy tradeoff.

\paragraph{Conformal Labeling outperforms baselines in the open-ended generation task.}
Table \ref{tab:results_oe} reports the annotation performance of Conformal Labeling on open-ended generation task.
The results show that Conformal Labeling reliably controls the FDR on open-ended generation tasks, whereas baselines may incur substantially violated FDR.
For example, on MMLU-Redux with Qwen3-4B-Thinking-2507, Conformal Labeling controls the FDR at 4.95\% while labeling 21.46\% of instances, whereas SGR incurs an FDR of 42.56\% and labels only 3.61\% of data, and FDR search also exceeds the target FDR with an FDR of 5.85\%.
Besides, when baselines control the FDR below the target level, Conformal Labeling consistently achieves higher power than the baselines.
Overall, Conformal Labeling achieves the best cost-accuracy tradeoff among all the compared methods.

\begin{figure*}[t]
    \centering

    \begin{subfigure}{0.32\linewidth}
        \centering
        \includegraphics[width=\linewidth]{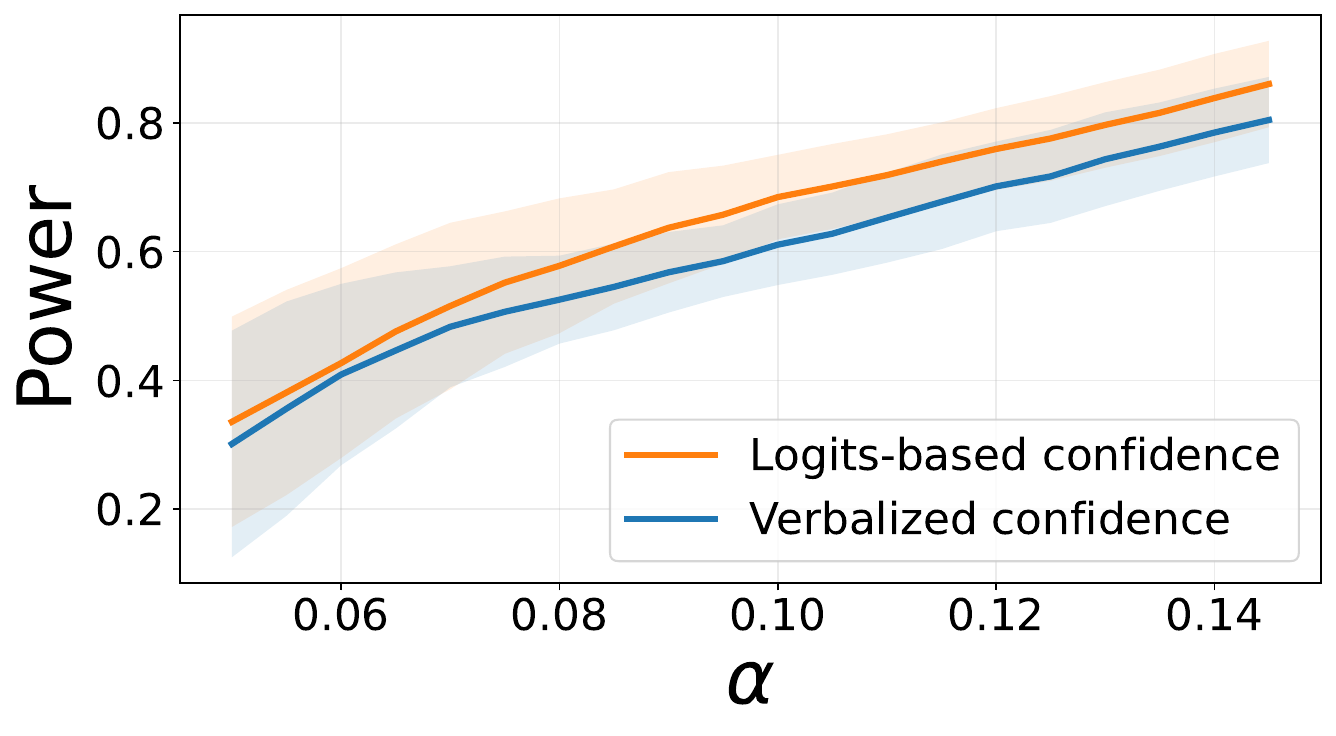}
        \caption{MMLU-Redux}
        \label{fig:power_mmlu}
    \end{subfigure}
    \hfill
    \begin{subfigure}{0.32\linewidth}
        \centering
        \includegraphics[width=\linewidth]{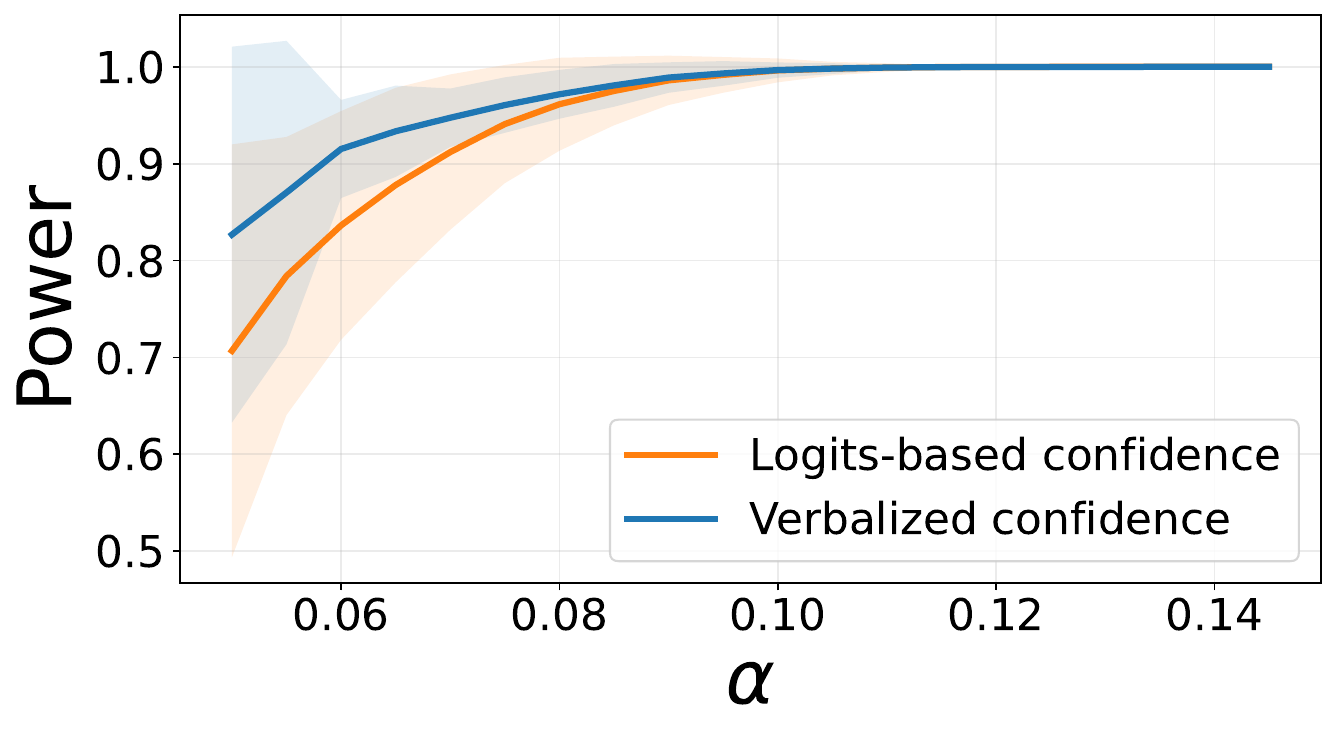}
        \caption{MATH-L5}
        \label{fig:power_math}
    \end{subfigure}
    \hfill
    \begin{subfigure}{0.32\linewidth}
        \centering
        \includegraphics[width=\linewidth]{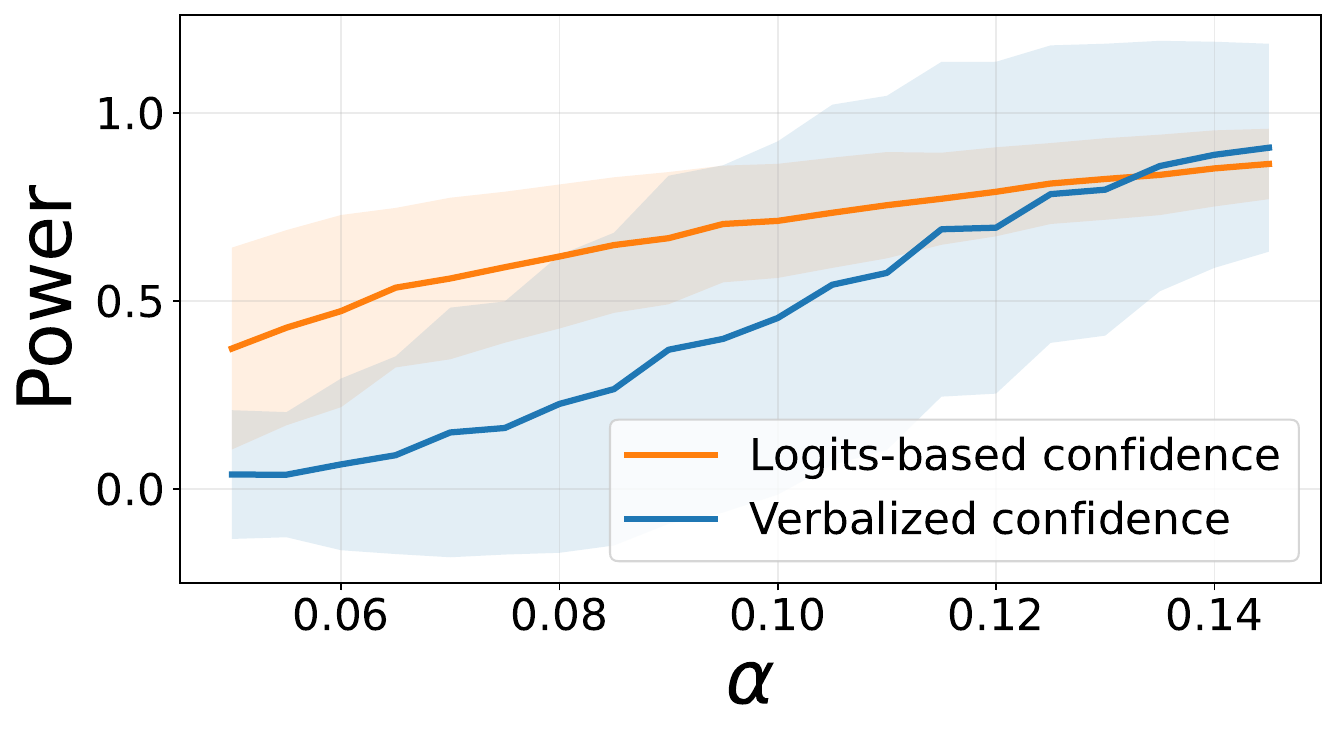}
        \caption{Zebra-Logic}
        \label{fig:power_zebra}
    \end{subfigure}

    \caption{
    \textbf{Power of Conformal Labeling with differnet score functions.}
    The experiments are conducted with Qwen3-4B-Instruct-2507 on three benchmarks.
    Shaded areas indicate standard deviations.
    We present the corresponding realized FDR in Figure \ref{fig:fdr_alpha}.
    }
    \label{fig:power_alpha}
    \vspace{-7pt}
\end{figure*}

\paragraph{Logits-based uncertainty score outperforms verbalized uncertainty score in open-ended generation task.}
Figure \ref{fig:power_alpha} compares logits-based and verbalized score functions under varying target level~$\alpha$.
Under different score functions, Conformal Labeling consistently controls the FDR below the target level, while using logits-based score function results in higher power.
For example, at $\alpha = 0.07$, the logits-based score achieves over 50\% power, whereas the verbalized score attains power of approximately 25\%, corresponding to an absolute power improvement of over 25\%.
We also plot the reliability diagrams of these score functions in Figure~\ref{fig:reliability_6panel}, which shows that logits-based scores achieve lower expected calibration error (ECE)~\citep{naeini2015obtaining} in most datasets, indicating superior reliability.
We also plot the $p$-value distribution under these two score functions in Figure \ref{fig:p_value_distribution}, which demonstrates that the logits-based uncertainty score provides a clearer separation between incorrect and correct labels.
Together, these results demonstrate that the logits-based score provides more reliable uncertainty estimates than the verbalized score, leading to higher power.

\paragraph{Conformal Labeling outperforms existing selection procedures.}
We conduct an ablation study on the selection procedure by fixing $p$-values and varying only the selection procedure.
We compare Conformal Labeling's selection procedure with three baselines: Storey-BH \citep{storey2002direct}, Quantile-BH \citep{benjamini2006adaptive}, and the standard BH procedure.
Unlike Conformal Labeling that estimates $\pi_0$ from the calibration set, Storey-BH and Quantile-BH procedures estimate $\pi_0$ from the $p$-values.
We provide a detailed introduction to these procedures in Appendix \ref{section:bh_var}.
Besides, while Conformal Labeling's selection procedure requires no hyperparameter tuning, Storey-BH and Quantile-BH involve hyperparameters that must be carefully specified.
In this work, we set hyperparameters for these procedures using a bootstrap method \citep{storey2002direct} detailed in Appendix \ref{section:hyperparameter_selection}.

The comparison results in Table~\ref{tab:compare_selection_results} show that our method can achieve much higher power than other selection procedures, with more precise FDR control.
For instance, on MATH-L5 with Qwen3-4B-Instruct-2507, Conformal Labeling achieves a power of 94.25\% at $\alpha=0.1$.
In comparison, the BH, Storey-BH, and Quantile-BH procedures yield power of 0.09\%, 61.42\%, and 29.00\%, respectively.
Conformal Labeling's selection procedure improves power by over 30\% compared to these baselines.
The highest power is also consistently observed in other datasets and models.   
In summary, Conformal Labeling's selection procedure performs the best among all the compared selection procedures.

\begin{table*}[t!]
\centering
\caption{\textbf{Comparison of different selection procedures on three labeling tasks.}
We compare Conformal Labeling's selection procedure against three baselines: (1) BH procedure, (2) Storey-BH procedure, and (3) Quantile-BH procedure.
We report results at $\alpha=0.1$ for all four selection procedures.
\textbf{Bold} numbers are superior results.
}

\label{tab:compare_selection_results}
\footnotesize
\setlength{\tabcolsep}{3pt}
\scriptsize
\renewcommand{\arraystretch}{1.0003}
\begin{tabular}{c c c *{10}{c}}
\toprule
\multirow{2}{*}{Task} & \multirow{2}{*}{Dataset} & \multirow{2}{*}{Model} 
& \multicolumn{2}{c}{CL} 
& \multicolumn{2}{c}{BH} 
& \multicolumn{2}{c}{Storey-BH} 
& \multicolumn{2}{c}{Quantile BH} \\
\cmidrule(lr){4-5} \cmidrule(lr){6-7} \cmidrule(lr){8-9} \cmidrule(lr){10-11}
 & & & {FDR \%} & {Power \%} & {FDR \%} & {Power (\%)} & {FDR (\%)} & {Power (\%)} & {FDR (\%)} & {Power (\%)} \\
\midrule

\multirow{6}{*}{Image} 
& \multirow{3}{*}{ImageNet} 
& ResNet-34   & 9.97 & \textbf{80.01} & 2.63 & 46.85 & 8.64 & 74.57 & 7.57 & 71.79 \\
& & DenseNet-161  & 9.99 & \textbf{85.03} & 2.27 & 44.55 & 8.34 & 81.17 & 7.33 & 78.40 \\
& & CLIP-VIT-B/32   & 9.98 & \textbf{46.04} & 4.08 & 21.70 & 8.52 & 41.02 & 8.05 & 39.23 \\
\cmidrule(lr){2-11}
& \multirow{3}{*}{ImageNet-V2} 
& ResNet-34   & 10.00 & \textbf{61.99} & 3.84 & 37.60 & 9.65 & 59.84 & 8.53 & 57.35 \\
& & DenseNet-161  & 9.83 & \textbf{66.67} & 3.39 & 28.39 & 9.62 & 65.45 & 8.44 & 61.66 \\
& & CLIP-VIT-B/32   & 9.96 & \textbf{34.56} & 4.85 & 14.43 & 9.20 & 32.25 & 8.35 & 28.52 \\
\midrule


\multirow{6}{*}{QA}
& \multirow{3}{*}{MedMCQA} 
& Llama-3.1-8B-Instruct  & 9.70 & \textbf{31.44} & 3.68 & 12.89 & 7.43 & 25.37 & 6.33 & 22.11 \\
& & Qwen3-32B    & 9.75 & \textbf{49.80} & 2.71 & 10.99 & 6.99 & 34.89 & 6.34 & 31.27 \\
& & Llama-3.1-70B-Instruct   & 9.95 & \textbf{69.67} & 2.79 & 29.67 & 8.48 & 64.09 & 6.74 & 58.06 \\
\cmidrule(lr){2-11}
& \multirow{3}{*}{MMLU} 
& Llama-3.1-8B-Instruct  & 9.99 & \textbf{58.25} & 3.40 & 29.49 & 7.31 & 49.74 & 7.17 & 49.16 \\
& & Qwen3-32B    & 10.00 & \textbf{82.96} & 1.37 & 10.51 & 7.33 & 74.57 & 6.25 & 70.37 \\
& & Llama-3.1-70B-Instruct   & 9.96 & \textbf{88.20} & 1.61 & 18.74 & 6.95 & 80.08 & 5.96 & 76.77 \\
\midrule

\multirow{9}{*}{OE}
& \multirow{3}{*}{MMLU-Redux} 
& Qwen3-4B-Instruct-2507      & 9.96 & \textbf{67.62} & 1.85 & 5.52 & 7.50 & 50.52 & 5.15 & 36.72 \\
& & Qwen3-4B-Thinking-2507    & 9.88 & \textbf{66.37} & 1.59 & 1.71 & 4.93 & 26.86 & 3.77 & 15.58 \\
& & DeepSeek-R1-Distill-Qwen-32B  
                               & 9.84 & \textbf{77.79} & 1.40 & 1.80 & 4.50 & 29.21 & 3.44 & 19.43 \\
\cmidrule(lr){2-11}

& \multirow{3}{*}{MATH-L5} 
& Qwen3-4B-Instruct-2507      & 7.97 & \textbf{94.25} & 0.99 & 0.09 & 5.38 & 61.42 & 3.13 & 29.00 \\
& & Qwen3-4B-Thinking-2507    & 5.16 & \textbf{95.69} & 0.68 & 0.02 & 1.69 & 5.66 & 0.91 & 1.61 \\
& & DeepSeek-R1-Distill-Qwen-32B  
                               & 6.09 & \textbf{95.28} & 0.78 & 0.04 & 3.67 & 41.55 & 1.93 & 9.64 \\
\cmidrule(lr){2-11}

& \multirow{3}{*}{Zebra-Logic} 
& Qwen3-4B-Instruct-2507      & 10.00 & \textbf{72.70} & 1.87 & 4.68 & 9.45 & 71.15 & 6.40 & 50.41 \\
& & Qwen3-4B-Thinking-2507    & 9.17 & \textbf{93.46} & 1.16 & 0.50 & 6.98 & 71.90 & 3.79 & 45.85 \\
& & DeepSeek-R1-Distill-Qwen-32B  
                               & 8.65 & \textbf{0.14} & 3.69 & 0.01 & 2.80 & 0.11 & 3.02 & 0.01 \\
\bottomrule
\end{tabular}
\vspace{-6pt}
\end{table*}

\paragraph{How many calibration samples are needed?}
To study the effect of $|D_{\mathrm{cal}}|$ on FDR and power, in Figure \ref{fig:calibration_variance}, we label \{5\%, 10\%, 20\%\} of the unlabeled dataset as the calibration set.
Our results show that Conformal Labeling is robust to calibration set size: even with a 5\% calibration ratio, the FDR remains controlled with low standard deviation.
Increasing the calibration ratio reduces the variance of both FDR and power, although the improvement from 10\% to 20\% is negligible.
Based on this trade-off between variance reduction and labeling cost, we use a 10\% calibration ratio for image labeling and LLM QA tasks.
Overall, while Conformal Labeling is robust to calibration set sizes, larger calibration sets reduce the variance of FDR and power.

\begin{figure}[t]
    \centering
    \includegraphics[width=0.5\textwidth]{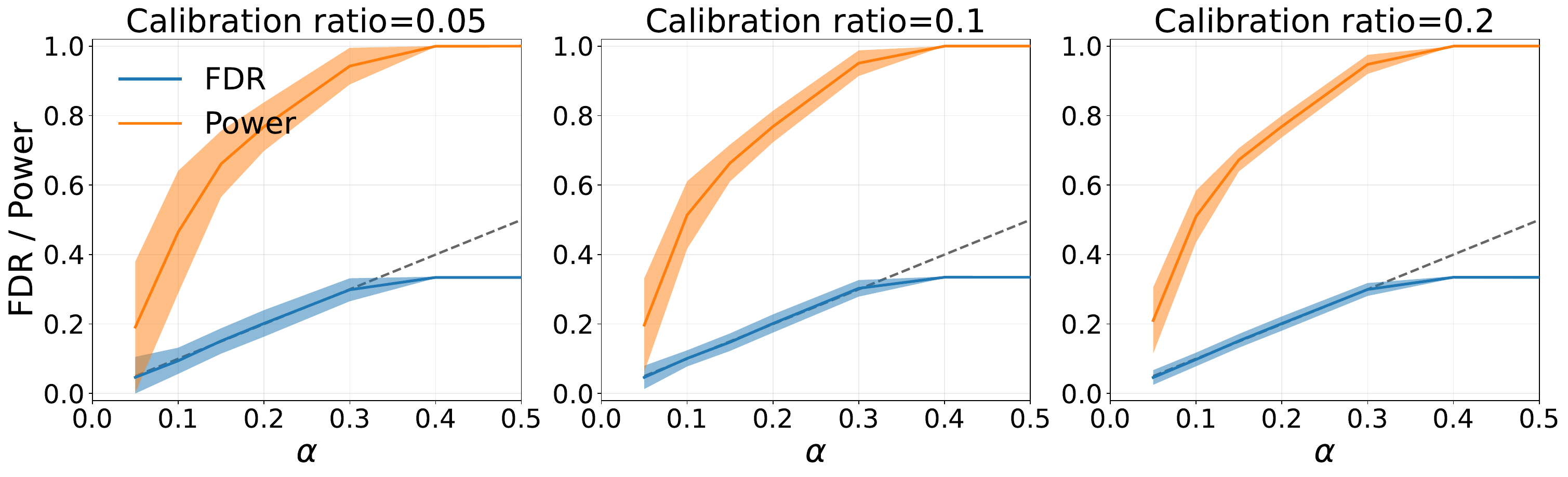}
    \caption{\textbf{Performance of Conformal Labeling with varying calibration set sizes on MedMCQA with Qwen3-32B.} 
    The shaded areas represent standard deviations.
    }
    \label{fig:calibration_variance}
    \vspace{-14pt}
\end{figure}


Due to space constraints, we defer additional analyses to the appendix.
In particular, Appendix~\ref{subsec:calibrated_confidence} compares our method against FDR search and selective prediction based on calibrated confidence baselines; Appendix~\ref{subsec:end_to_end} examines whether the selected labels can be used to help train a model with improved accuracy on a synthetic dataset; and Appendix~\ref{subsec:prompt_sensitivity} evaluates Conformal Labeling's sensitivity to prompt designs in LLM QA tasks.

\section{Related work}
\label{sec:related-work}

\paragraph{Conformal Inference.}
Conformal inference \citep{vovk2005algorithmic, gazin2025selecting, humbert2025online} is a statistical framework characterized by distribution-free and model-agnostic theoretical guarantees.
In the literature, two particularly popular directions have emerged: conformal novelty detection \citep{bates2023testing, bashari2023derandomized, wu2024conditional, bashari2025robust, lee2025full} and conformal selection \citep{jin2023selection, gui2024conformal}.
The former focuses on identifying out-of-distribution instances, with recent advances focusing on enhancing selection power by incorporating various forms of side information \citep{10.1093/jrsssb/qkad138, marandon2024adaptive, zhao2025conformalizedempiricalbayesmethod}.
The latter aims to select candidates whose unobserved outcomes exceed user-specified values \citep{jin2023selection}, which has been extended to multivariate data selection \citep{bai2025multivariate}, online data selection \citep{xu2024online, liu2025online}, and human-in-the-loop adaptive data selection \citep{gui2025acs}.
Besides, some works have developed conformal inference techniques for classification tasks \citep{zhao2023controlling, sun2025unified}. For example, 
recent work \citep{zhao2023controlling} proposes controlling the overall error rate in binary and multi-class classification, while a concurrent work \citep{sun2025unified} extends this to control the general group-wise false discovery rate within a unified framework. In this work, we motivate and tailor the framework of conformal selection for selective labeling and support the application for open-ended generation tasks.

\paragraph{Selective labeling and prediction.}
Selective labeling is an increasingly important topic for balancing the tradeoff between labeling cost and error \citep{gu2012selective, vrabac2022medselect}.
While prior work mainly explored heuristic approaches \citep{Li_2023, duan2023h}, a concurrent work \citep{candes2024probablyapproximately} proposes approximately correct labeling.
It guarantees that the overall labeling error is small with high probability by counterbalancing the error of AI labels with the zero error of expert labels.
Our work considers a different objective: controlling the labeling error within the AI-labeled subset. 
Thus, these two methods address different error notions and are applicable in different settings. 
Notably, our method does not require expert annotations for samples on which the AI model abstains, whereas PAC labeling leverages such annotations to control the overall error.
Some other work has also taken steps toward providing statistical guarantees for selective labeling \citep{zrnic2024active, gligoric2024can}, though these works focus on downstream inference rather than guaranteeing label quality.
Our method is also related to selective prediction, where a model is allowed to abstain from making predictions when uncertain \citep{JMLR:v11:el-yaniv10a, mozannar2020consistent, yang2023uncertainty, kamath2020selective, yoshikawa2023selective}.
However, most of these methods cannot provide theoretical guarantees on prediction error.
While works on selective prediction with risk control \citep{JMLR:v11:el-yaniv10a, geifman2017selective} can provide theoretical guarantees, they target a different error measure called marginal FDR, which is less preferred by prior researchers compared with FDR control. 
We provide a detailed discussion of the error measures in Appendix~\ref{subsec:mfdr_vs_fdr}.

\section{Conclusion}
In this paper, we introduce Conformal Labeling, a novel method for identifying a subset where AI predictions can be provably trusted.
Specifically, we construct conformal $p$-values and estimate the accuracy of AI models using a small labeled calibration dataset.
Then, we design a principled selection procedure that automatically adapts to the estimated accuracy, with higher accuracy permitting a larger threshold.
We provide theoretical guarantees that Conformal Labeling controls the FDR below the nominal level under mild assumptions.
Extensive experiments demonstrate that Conformal Labeling achieves tight FDR control and high power across various tasks, including image labeling, LLM QA, and LLM open-ended generation tasks.
We hope that the insights from this work will inspire future research on rigorous methods for selective labeling.

\bibliographystyle{icml2026}
\bibliography{main}

\newpage
\appendix
\onecolumn

\section{Technical proofs}



\subsection{Lemmas for proving Theorem \ref{thm:CL_fdr_iid}}

\begin{lemma}(\citet{benjamini2001control}, Theorem~1.2)\label{lem:prdsfdr}
If the joint distribution of the $p$-values is PRDS on the subset corresponding to true null hypotheses, applying the Benjamini--Hochberg procedure at level $\alpha$ guarantees
\[
\text{FDR} \leq \frac{m_0}{m} \alpha,\]
where $m_0$ is the number of true null hypotheses.
\end{lemma}

\begin{lemma}\label{lem:fdpbound}
Under the assumptions of Theorem~\ref{thm:CL_fdr_iid}, where the calibration samples \((X_i, Y_i)_{i=1}^n\) and the test samples \((X_{n+j}, Y_{n+j})_{j=1}^m\) are independently and identically distributed (i.i.d.), and conditional on \(n_0\) (the number of true null hypotheses in the calibration set) and \(m_0\) (the number of true null hypotheses in the test set), the expected false discovery proportion (FDP) satisfies
\[
\mathbb{E}[\text{FDP} \mid n_0, m_0] \leq \frac{1 + n}{1 + n_0} \frac{m_0}{m} \alpha,
\]
where \(\alpha \in (0, 1)\) is the target false discovery rate (FDR) level, and the selection set \(\mathcal{R}\) is determined by Algorithm~\ref{alg:CL_conformal}.
\end{lemma}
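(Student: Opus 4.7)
The overall plan is to recognize that the thresholding rule in Algorithm~\ref{alg:CL_conformal} is exactly the Benjamini--Hochberg (BH) procedure applied to $\{\hat p_j\}$ at the effective level $q = \alpha(n+1)/(n_0+1)$, and then to carry out a BH-style FDR analysis adapted to the dependence induced by a shared calibration set. Throughout, I would condition on the pair $(n_0, m_0)$, which makes both the threshold $\alpha j(n+1)/(m(n_0+1))$ and the target bound deterministic. The aim is to prove that for every null index $j \in \mathcal{H}_0$,
\[
\mathbb{E}\!\left[\frac{\mathbf{1}\{j \in \mathcal{R}\}}{\max(|\mathcal{R}|,1)} \,\bigg|\, n_0, m_0\right] \le \frac{\alpha(n+1)}{m(n_0+1)},
\]
so that summing over the $m_0$ null indices yields the claimed bound $\frac{1+n}{1+n_0}\cdot\frac{m_0}{m}\alpha$.

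First, I would establish the null super-uniformity of $\hat p_j$. Conditional on $n_0$ and on $(X_{n+j},Y_{n+j}) \in \mathcal{H}_0$, the pair $(X_{n+j},Y_{n+j})$ is exchangeable with every $(X_i,Y_i) \in \mathcal{D}^0_{\mathrm{cal}}$, because all are i.i.d.\ from $\mathcal{P}_{\mathcal{X}\mathcal{Y}}$ restricted to $\{Y \neq \hat Y\}$. Hence the rank of $\mathcal{S}_{n+j}$ in the augmented bag of $n_0+1$ null scores is uniform on $\{1,\dots,n_0+1\}$, and because $U_j \sim U[0,1]$ smooths out ties, $\hat p_j$ is (conditionally) uniform on $[0,1]$; in particular, $\mathbb{P}(\hat p_j \le t \mid H_j^0) \le t$.

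Second, I would run a leave-one-out / swap argument in the spirit of \citet{jin2023selection}: for a fixed null $j$, swap $(X_{n+j},Y_{n+j})$ with each of the $n_0$ null calibration points in turn, producing $n_0+1$ statistically identical copies of the selection procedure. Decomposing
\[
\frac{\mathbf{1}\{j \in \mathcal{R}\}}{\max(|\mathcal{R}|,1)} \;=\; \sum_{k=1}^{m} \frac{1}{k}\,\mathbf{1}\!\left\{\hat p_j \le \tfrac{\alpha k(n+1)}{m(n_0+1)},\; |\mathcal{R}| = k\right\},
\]
one uses the monotonicity of BH rejections in $\hat p_j$ together with the swap symmetry to dominate the event on the right by one governed by the super-uniform variable $\hat p_j$; taking conditional expectations then yields the per-null bound, which summed over $j \in \mathcal{H}_0$ gives the lemma.

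The hard part is this last step: the conformal $p$-values share the same calibration scores, so they are mutually dependent, and the denominator $|\mathcal{R}|$ itself depends on $\hat p_j$. What rescues the argument is precisely the exchangeability exposed by the swap --- once we further condition on the unordered multiset of null scores, the joint distribution is invariant under relabeling the null samples, and this PRDS-like property is what allows the standard BH computation to go through in this dependent setting.
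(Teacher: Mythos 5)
Your plan is sound and the final bookkeeping matches the lemma (a per\nobreakdash-null bound of $\frac{\alpha(n+1)}{m(n_0+1)}$ summed over $m_0$ nulls gives $\frac{1+n}{1+n_0}\frac{m_0}{m}\alpha$), but you take a genuinely different route from the paper. The paper's proof is the direct computation: it conditions on $j^*$, applies the marginal super-uniformity $\mathbb{P}\{\hat p_j \le t\} \le t$ at the data-dependent threshold $t = \frac{1+n}{1+n_0}\cdot\frac{\alpha j^*}{m}$ to bound $\mathbb{E}\bigl[\sum_{j\le m_0}\mathbf{1}\{\hat p_{n+j}\le t\} \mid n_0, m_0, j^*\bigr]$ by $m_0 t$, and then cancels $j^*$ against the denominator. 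This is short, but the step of conditioning on $j^*$ while invoking a marginal property of $\hat p_j$ is exactly the point you flag as the crux: $j^*$ is a function of all the $p$-values, including $\hat p_j$, so super-uniformity conditional on $j^*$ does not follow for free. Your proposal instead isolates each null index, uses the decomposition $\mathbf{1}\{j\in\mathcal{R}\}/\max(|\mathcal{R}|,1)=\sum_k k^{-1}\mathbf{1}\{\hat p_j\le \alpha k(n+1)/(m(n_0+1)),\,|\mathcal{R}|=k\}$, and invokes the leave-one-out/swap exchangeability machinery of \citet{jin2023selection} (conditioning on the unordered multiset of null scores) to decouple $\hat p_j$ from the rejection count. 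This is the standard rigorous treatment of the BH-with-conformal-$p$-values setting and it also correctly accounts for the mutual dependence of the $\hat p_j$ through the shared calibration scores, which the paper's proof does not discuss. What the paper's route buys is brevity; what yours buys is a proof in which the one genuinely delicate step is actually justified rather than asserted. The only caveat is that your hardest step is still a sketch: to close it you must exhibit the precise identity (on $\{j\in\mathcal{R}\}$ the rejection count equals its value with $\hat p_j$ set to $0$, which is measurable with respect to the other scores and the null-score multiset) and verify that the conditional law of $\hat p_j$ given that $\sigma$-field is still super-uniform.
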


\begin{proof}
Without loss of generality, assume that the first $n_0$ calibration samples and the first $m_0$ test samples correspond to true null hypotheses.
By the standard result in conformal inference \citep{vovk2005algorithmic} we have,
\[
P(\hat{p}_{n+j} \leq t \mid n_0,m_0) \leq t \quad \text{for all } t\in[0,1],\ j=1,\dots,m_0.
\]
From results in \cite{bates2023testing}, we know conformal $p$-values $(\hat{p}_{1}, \ldots, \hat{p}_{m})$ are PRDS on the set of mislabeled data.
Lemma~\ref{lem:prdsfdr} hence guarantees that conditional on $n_0$ and $m_0$, applying Conformal Labeling at level $\alpha$ satisfies \[ \mathbb{E}[\text{FDP} \mid n_0, m_0] \leq \frac{1 + n}{1 + n_0} \frac{m_0}{m} \alpha, \]
\end{proof}

\subsection{Proof of theorem \ref{thm:CL_fdr_iid}}\label{subsection:proof of CL}

\begin{proof}[Proof of theorem \ref{thm:CL_fdr_iid}]

Under the i.i.d. assumption of Theorem~\ref{thm:CL_fdr_iid}, the calibration samples \((X_i, Y_i)_{i=1}^n\) and the test samples \((X_{n+j}, Y_{n+j})_{j=1}^m\) are independently and identically distributed. This implies that each hypothesis, whether from the calibration or test set, has an equal probability \( p \) of being a true null, where \( p \) represents the expected probability of incorrect prediction under the null hypothesis. 

Consequently, the number of true null hypotheses in the calibration set, denoted \( n_0 \), follows a binomial distribution \( n_0 \sim \text{Binomial}(n, p) \), and the number of true null hypotheses in the test set, denoted \( m_0 \), follows \( m_0 \sim \text{Binomial}(m, p) \). The independence across the calibration and test sets arises from the i.i.d. structure, ensuring that \( n_0 \) and \( m_0 \) are independent random variables.

Using the law of total expectation, we express FDR as
\begin{equation}
    \begin{aligned}
        \text{FDR} =& \mathbb{E}[\text{FDP}]\\
        =& \mathbb{E}[\mathbb{E}[\text{FDP} \mid n_0, m_0]]\\
        \leq& \mathbb{E}\left[ \frac{m_0}{m} \frac{1 + n}{1 + n_0} \alpha\right] \quad \text{by Lemma \ref{lem:fdpbound}}\\
        =& \mathbb{E}\left[\frac{m_0}{m}\right] \cdot\mathbb{E}\left[\frac{1 + n}{1 + n_0}\right] \cdot \alpha \quad \text{since } n_0 \text{ and } m_0 \text{  are independent}\\
        =& p\cdot\mathbb{E}\left[\frac{1 + n}{1 + n_0}\right] \cdot \alpha \quad \text{since }  m_0 \sim \text{Binomial}(m, p) 
    \end{aligned}
\end{equation}

Now it suffices to show that \( p \cdot \mathbb{E}\left[\frac{1 + n}{1 + n_0}\right] = [1 - (1 - p)^{n+1}]\).

Since \(n_0\) has probability mass function
\[
\mathbb{P}(n_0 = k) = \binom{n}{k} p^k (1-p)^{n-k}, \quad k = 0,1,\dots,n,
\]
we compute

\begin{equation}
    \begin{aligned}
        p \cdot \mathbb{E}\left[\frac{1+n}{1+k}\right] =&p\cdot \sum_{k=0}^n \frac{1+n}{1+k} \binom{n}{k} p^{k} (1-p)^{n-k}\\
        =& \sum_{k=0}^n \frac{(n+1)!}{(k+1)!(n-k)!} \, p^{k+1} (1-p)^{n-k}\\
        =& \sum_{k=0}^n \binom{n+1}{k+1} \, p^{k+1} (1-p)^{n-k}\\
        =& \sum_{l=1}^{n+1} \binom{n+1}{l} p^j (1-p)^{n+1-l} \quad \text{by letting }l=k+1 \\
        =& \sum_{l=0}^{n+1} \binom{n+1}{l} p^j (1-p)^{n+1-l} - \binom{n+1}{0} p^0 (1-p)^{n+1-0}\\
        =& \sum_{l=0}^{n+1}\mathbb{P}(X=l) - (1-p)^{n+1} \quad \text{where } X \sim \text{Binomial}(n+1, p)\\
        =& 1 - (1-p)^{n+1}
    \end{aligned}
\end{equation}

This completes the proof, establishing the desired bound on the FDR.

\end{proof}

\section{Overview of Different Uncertainty Score Functions}\label{section:score}

In this section, we provide an overview of three representative uncertainty score functions that are widely used in misclassification detection: Maximum Softmax Probability (MSP) \citep{hendrycks2016baseline}, the energy-based score \citep{liu2020energy}, and the DOCTOR-$\alpha$ score \citep{granese2021doctor}. 
Each of these functions captures predictive uncertainty from a different perspective.

\paragraph{Maximum Softmax Probability (MSP).} 
The Maximum Softmax Probability (MSP) baseline \citep{hendrycks2016baseline} proposes to use confidence of the AI model as an uncertainty score.
\[
S_{\mathrm{MSP}}(x) = 1-  \max_{y \in \mathcal{Y}} p_y(x),
\]
where $p_y(x)$ is the softmax probability assigned to class $y$ for input $x$. 
The key idea is straightforward: if the model assigns a high probability to its most likely class, the prediction is considered confident. 
Although MSP is simple and effective, it has notable limitations: it only reflects the confidence in the top-1 prediction and ignores the structure of the remaining probability distribution, which may contain useful information about uncertainty.

\paragraph{Energy-based score.} 
The energy-based score \citep{liu2020energy} is defined as
\[
S_{\mathrm{Energy}}(x) = \log \sum_{y \in \mathcal{Y}} \exp\!\left(f_y(x)\right),
\]
where $f_y(x)$ denotes the logit value for class $y$ and $T > 0$ is a temperature parameter. 
This score is derived from the concept of energy in statistical physics and leverages the log-sum-exp operator over all logits. 
Unlike MSP, which only considers the maximum probability, the energy score integrates information from the full logit vector, thereby providing a smoother and more informative confidence measure. 

\paragraph{DOCTOR-$\alpha$ score.} 
The DOCTOR-$\alpha$ score \citep{granese2021doctor} is defined as
\[
S_{\alpha}(x) =  \sum_{y \in \mathcal{Y}} p_y(x)^2,
\]
where $p_y(x)$ denotes the softmax probability for class $y$. 
This score is inspired by information-theoretic measures of uncertainty, as it is closely related to the quadratic R\'enyi entropy. 
The intuition is that if the predictive distribution is sharp (i.e., one class has probability close to one), then $\sum_y p_y(x)^2$ will be large, indicating high confidence. 
Conversely, if the distribution is flat (i.e., the model is uncertain and spreads probability mass across many classes), then $S_{\alpha}(x)$ will be small. 
Compared to MSP, the DOCTOR-$\alpha$ score leverages information from the entire probability distribution rather than only the top prediction, making it a richer measure of uncertainty for misclassification detection.

\section{BH Procedure and its Adaptive Variants} \label{section:bh_var}
Consider testing $m$ null hypotheses $H_{0}^1, \ldots, H_{0}^m$ based on their corresponding $p$-values $\{p_1, p_2, \ldots, p_m\}$. 
For a true null hypothesis $H_0^j$, the corresponding $p$-value $p_j$ is a random variable that is super-uniform on $[0,1]$ under the null hypothesis.
Formally, for any $u \in [0,1]$,
\[
\mathbb{P}(p_j \leq u \mid H_{0}^j \text{ is true}) \leq u.
\]

Define $p_{(j)}$ as the $j$-th smallest $p$-value among a set of $p$-values $\{p_1, p_2, \ldots, p_m\}$.
Given a set of $p$-values $\{p_1, p_2, \ldots, p_m\}$, the BH algorithm returns $S=\{j\in\{0, \ldots,m\}:p_{j}\leq\frac{\alpha j^*}{m}\}$, where $\alpha$ is the target FDR level and 
\[
j^*=\text{max}\{j\in\{1,\ldots,m\}:p_{(j)} \leq\frac{\alpha j}{m}\}
\]
When the null $p$-values $\{p_j :j \in \mathcal{H}_0\}$ are independent, the BH procedure is proved to control the FDR at level $\pi_0\alpha$ in finite samples \citep{benjamini1995controlling}, where $\pi_0=\frac{|\mathcal{H}_0|}{m}$ is the proportion of true nulls.
The independence assumption can be further relaxed to the PRDS condition \citep{benjamini2001control}. 

If $\pi_0$ is small, the FDR control will be overly conservative.
For example, if a classifier achieves a 80\% accuracy in the unlabeled test dataset, then $\pi_0 = 0.2$—which leads to overly conservative FDR control.
When $\pi_0$ is known, we can apply BH procedure at level $\frac{\alpha}{\pi_0}$ to close the gap.
In practice,  $\pi_0$ is typically unknown.
Several adaptive BH procedures attempt to address this issue by estimating $\pi_0$ and adjusting the target FDR level $\alpha$ accordingly.
These procedures are often called the $\pi_0$-adaptive versions of the BH algorithm.
Two most famous estimators are Storey-BH \citep{storey2002direct} and Quantile-BH \citep{benjamini2006adaptive}:

\[\hat{\pi}_0^{Storey}(\lambda)=\frac{1 + \sum_{i=1}^m\mathbf{1}\{p_i\geq \lambda\}}{m(1-\lambda)}, \lambda \in (0,1)\]
\[\hat{\pi}_0^{Quant}(k_0)=\frac{m-k_0+1}{m(1-p_{(k_0)})}, k_0 \in \{1,\dots,m\}\].

$\lambda$ and $k_0$ are hyperparameters determined by users.

\section{Hyperparameter selection for BH adaptive variants}\label{section:hyperparameter_selection}
Both Storey-BH and Quant-BH require careful hyperparameter selection—$\lambda$ for Storey-BH and $k_0$ for Quant-BH—as this choice significantly impacts their performance. Following \citet{storey2002direct}, we employ a bootstrap-based method to select the optimal $\lambda$ (and analogously, $k_0$ for the Quantile BH procedure).
Further details can be found in Section 9 of \citet{storey2002direct}.
The algorithm proceeds as follows:

\begin{enumerate}
    \item Define a grid $R$ for the hyperparameter, i.e. $R = \{0.1, 0.2, \dots, 0.9\}$ for Storey-BH.
    
    \item For each $\lambda \in R$, compute:
    \begin{equation}
        \widehat{\mathrm{pFDR}}_\lambda(\gamma) = \frac{\widehat{\pi}_0(\lambda)\gamma}{\widehat{\Pr}(p \leq \gamma)\{1 - (1-\gamma)^m\}}
    \end{equation}
    where $\widehat{\pi}_0(\lambda) = \widehat{\pi}_0^{Storey}(\lambda) \text{ or } \widehat{\pi}_0^{Quant}(\lambda)$ and $\widehat{\Pr}(p \leq \gamma)$ is the empirical estimate of $\Pr(p \leq \gamma)$
    
    \item Generate $B$ bootstrap replicates $\{p_1^{*,b}, \dots, p_m^{*,b}\}_{b=1}^B$ and compute $\widehat{\mathrm{pFDR}}_\lambda^{*,b}(\gamma)$ for each $b$.
    
    \item Estimate the MSE for each $\lambda$:
    \begin{equation}
        \widehat{\mathrm{MSE}}(\lambda) = \frac{1}{B}\sum_{b=1}^B \left( \widehat{\mathrm{pFDR}}_\lambda^{*,b}(\gamma) - \min_{\lambda' \in R} \widehat{\mathrm{pFDR}}_{\lambda'}(\gamma) \right)^2
    \end{equation}
    
    \item Select $\hat{\lambda} = \argmin_{\lambda \in R} \widehat{\mathrm{MSE}}(\lambda)$
\end{enumerate}

\section{Uncertainty score functions for open-ended generation task}
\label{sec:oe_score}
\paragraph{Logits-based uncertainty score.}
In particular, let $y = (y_1, y_2, \ldots, y_l)$ be a generated sequence of length $l$.
We define the logits-based score as:
\begin{equation*}
S_{\text{logits}}(x) = 1 - \frac{1}{l} \sum_{j=1}^l \mathbb{P}(y_j \mid y_1, \ldots, y_{j-1}, x)
\end{equation*}
where $\mathbb{P}(y_j \mid y_1, \ldots, y_{j-1}, x)$ is the conditional probability of token $y_j$.
This score measures the average confidence of the model across all generated tokens.
When the model is very confident about each token, the score is low.
When the model is less confident, the score is high, indicating an example that may benefit from a more capable annotation source.

\paragraph{Verbalized uncertainty score}
For the verbalized score, we first let the model explicitly state its self-reported confidence.
The uncertainty score is then defined as one minus the self-reported confidence.
In this work, we report the average confidence over 10 trials, and the prompts are listed in Table~\ref{tab:verbalized_score_prompt}.

\begin{table}
\caption{Prompt for the verbalized confidence scores.}
\label{tab:verbalized_score_prompt}
\renewcommand{\arraystretch}{1.4}
\rowcolors{1}{Gray}{Gray}

\resizebox{\textwidth}{!}{
\begin{tabular}{!{\vrule width 1.2pt} p{0.95\textwidth} !{\vrule width 1.2pt}}
\textbf{System prompt:} You are a reasoning assistant. For each question and proposed answer, you must estimate how likely the proposed answer is correct.\\

\textbf{User prompt:}\\
Question: \{QUESTION\}\\
Answer: \{ANSWER\}\\
Provide a probability (between 0.0 and 1.0) that your answer is correct. Only output the probability.\\
\end{tabular}
}

\end{table}

\section{Implementation details} \label{section:moreimplementation}
\paragraph{Hyperparameter settings of LLMs}
\label{para:hyperparameter}
For the open-ended generation task, we configure the decoding parameters as follows. 
For Qwen/Qwen3-4B-Instruct-2507, we set Temperature=0.7, TopP=0.8, TopK=20, and MinP=0.
For Qwen/Qwen3-4B-Thinking-2507, we set Temperature=0.6, TopP=0.95, TopK=20, and MinP=0.
For DeepSeek/DeepSeek-R1-Distill-Qwen-32B, we set Temperature=0.6, TopP=0.95, and MinP=0.
All experiments were run on NVIDIA GeForce RTX 4090, NVIDIA L40, and NVIDIA RTX Pro 6000.

\paragraph{Dataset splitting strategy.}
For image labeling and LLM QA tasks, we randomly split 10\$ of the unlabeled dataset as the calibration dataset.
For open-ended generation task, we use dataset-specific splitting strategy, which is detailed in Table \ref{}.

\begin{table}[t]
\centering
\caption{Details of dataset splitting settings for open-ended generation task.}
\label{tab:datasets_split}
\begin{tabular}{llccc}
\toprule
Dataset & Dataset Type & Dataset Size & Split Setting & Size \\
\midrule
\multirow{2}{*}{MMLU-Redux} 
& \multirow{2}{*}{General Task} 
& \multirow{2}{*}{3000} 
& Calibration & 500 \\
& & & Test & 2500 \\
\midrule
\multirow{2}{*}{MATH-500} 
& \multirow{2}{*}{Math Reasoning} 
& \multirow{2}{*}{500} 
& Calibration & 300 \\
& & & Test & 200 \\
\midrule
\multirow{2}{*}{MATH-L5} 
& \multirow{2}{*}{Math Reasoning} 
& \multirow{2}{*}{721} 
& Calibration & 300 \\
& & & Test & 421 \\
\midrule
\multirow{2}{*}{Zebra-Logic} 
& \multirow{2}{*}{Text Reasoning} 
& \multirow{2}{*}{750} 
& Calibration & 300 \\
& & & Test & 450 \\
\midrule
\multirow{2}{*}{HumanEval+} 
& \multirow{2}{*}{Coding Task} 
& \multirow{2}{*}{164} 
& Calibration & 100 \\
& & & Test & 64 \\
\bottomrule
\end{tabular}
\end{table}

\paragraph{Prompts for LLM QA task} 
We adopt the same prompts as in \citet{luo2025your}. 
The prompt is presented in Table~\ref{tab:llmqa_prompt}. 

\begin{table}[h]
\centering
\renewcommand\arraystretch{2}
\setlength{\tabcolsep}{6mm}
\caption{Prompt for LLM QA task}
\begin{adjustbox}{max width=0.8\textwidth, scale=1.25}
\LARGE
\begin{tabular}{cc}
\hline
\textbf{Dataset} & \textbf{Prompt} \\
\hline
LLM QA &
The following are multiple-choice questions. Give ONLY the correct option, no other words or explanation: \\
& [Question] A: [Option 1] B: [Option 2] C: [Option 3] D: [Option 4] Answer: [Mask] \\
\hline
\end{tabular}
\end{adjustbox}
\label{tab:llmqa_prompt}
\end{table}






\section{Compared baselines}
\label{sec:baselines}
We compare Conformal Labeling against four baseline methods:
(1) an FDR-search heuristic baseline;
(2) selection with guaranteed risk control (SGR) \citep{geifman2017selective}; 
(3) labeling test instances with AI models when the uncertainty score satisfies $\mathcal{S}_{n+j} \leq \alpha$;
and
(4) applying AI predictions to the entire test dataset.
FDR-search heuristic baseline selects the largest uncertainty threshold such that the empirical FDR of the calibration dataset does not surpass $\alpha$.
SGR is a theoretically rigorous method that provides high-probability guarantees on marginal FDR control.
It searches for the cutoff over a grid of confidence values, and then takes a union bound to achieve the overall error control.
We provide a detailed introduction to the FDR-search heuristic baseline and SGR below.

\paragraph{FDR-search heuristic baseline.}
Let $S(X)$ be a confidence function such as MSP.
For a confidence threshold $t$, we define the empirical FDR on the calibration set as
\[
\widehat{\mathrm{FDR}}(t) 
= \frac{ 
    \sum_{i=1}^n \mathbf{1}\{S(x_i) \geq t,\; Y_i \neq \hat{Y}_i\} 
}{
    \max\left\{ \sum_{i=1}^n \mathbf{1}\{S(x_i) \geq t\},\; 1 \right\}
}
\]
The threshold is selected by searching from the largest to the smallest confidence value.
Specifically, we start from the smallest possible threshold and progressively increase $t$ until the empirical FDR on the calibration set falls below the target level $\alpha$:
\[
t^\ast(\alpha)
=
\min\{\, t : \widehat{\mathrm{FDR}}_{\mathrm{cal}}(t) \le \alpha \,\}.
\]
Then a test instance is selected if its uncertainty score satisfies $\mathcal{S}(X_i) \geq t^\ast(\alpha)$.

\paragraph{Selection with guaranteed risk control (SGR).}
Let $f$ be a fixed classifier and let $c:\mathcal{X}\to\mathbb{R}$ be a confidence score, where larger values indicate more reliable predictions.
For a threshold $\theta$, define the selection rule
\[
g_\theta(x)=\mathbf{1}\{c(x)\ge \theta\},
\]
so that predictions are made only on instances with confidence at least $\theta$.
For a calibration set $\{(x_i,y_i)\}_{i=1}^n$, the empirical selective risk at threshold $\theta$ is
\[
\widehat{R}(\theta)
=
\frac{
\sum_{i=1}^n \mathbf{1}\{g_\theta(x_i)=1,\; f(x_i)\neq y_i\}
}
\max\left\{\sum_{i=1}^n \mathbf{1}\{g_\theta(x_i)=1\},\,1\right\}.
\]
The SGR algorithm searches over candidate thresholds (ordered from high to low confidence) and, for each threshold, computes a high-probability upper bound on the true selective risk based on the calibration data.
The output threshold $\theta^\ast$ is the largest threshold whose risk bound does not exceed a target level $r^\ast$.
At test time, an instance is selected if $c(X_i)\ge \theta^\ast$.

\begin{theorem}[Selective risk guarantee of SGR]
Fix a target risk level $\alpha\in(0,1)$ and a confidence parameter $\delta\in(0,1)$.
Let $\theta^\ast$ be the threshold returned by the SGR algorithm using a calibration set of size $n$.
Then, with probability at least $1-\delta$ over the draw of the calibration set, the resulting selective classifier $(f,g_{\theta^\ast})$ satisfies
\[
R(f,g_{\theta^\ast}) \;\le\; \alpha,
\]
where $R(f,g_{\theta^\ast})$ denotes the true selective risk.
\end{theorem}

In the labeling task, the risk function is defined as
\[
R = \frac{\mathbb{E}\left[\sum_{j=1}^m \mathbf{1}\{Y_{n+j} \neq \hat{Y}_{n+j} \text{ and } j \in \mathcal{R}\}\right]}{\mathbb{E}\left[\sum_{j=1}^m \mathbf{1}\{j \in \mathcal{R}\}\right]},
\]
which is also known as the marginal false discovery rate (mFDR) in hypothesis testing.
In the literature, FDR control is generally preferred over mFDR control.
We provide a discussion of mFDR and FDR control in Appendix~\ref{subsec:mfdr_vs_fdr}.


\section{Extension to Regression Tasks}\label{section:regression}

While our primary focus is on classification, Conformal Labeling can be naturally extended to regression settings.
Consider a loss function \(L(Y, \hat{Y})\) that quantifies prediction error—for instance, the squared error \(L(Y, \hat{Y}) = (Y - \hat{Y})^2\)—alongside a user-specified tolerance level \(\epsilon\).
For each test sample, we define the null and alternative hypotheses as:
\[
H_0^j: L(Y_{n+j}, \hat{Y}_{n+j}) > \epsilon 
\quad \text{versus} \quad 
H_1^j: L(Y_{n+j}, \hat{Y}_{n+j}) \leq \epsilon.
\]

This framework generalizes the classification setting, which corresponds to the special case where \(L(Y, \hat{Y}) = \mathbf{1}\{Y \neq \hat{Y}\}\) and \(\epsilon = 0\).
To apply Conformal Labeling in regression, we need an uncertainty score that reflects the model's predictive uncertainty. 
While uncertainty score functions are straightforward in classification tasks (e.g., \(1 - \max_{y \in \mathcal{Y}} f_y(X)\)), regression requires alternative approaches to get an uncertainty score function.
For example, when using LLMs, we can leverage their verbalized confidence or prompt them to output prediction intervals, using the interval width as the uncertainty score.
The complete procedure for Conformal Labeling in the regression task is outlined in Algorithm~\ref{alg:CL_conformal_reg}.

\begin{algorithm}[t!]
\caption{Conformal Labeling for Regression Tasks}
\label{alg:CL_conformal_reg}
\begin{algorithmic}[1]
\REQUIRE Calibration set $\mathcal{D}_{\mathrm{cal}} = \{(X_i, Y_i)\}_{i=1}^{n}$; 
test instances $\{X_{n+j}\}_{j=1}^m$; 
pre-trained predictor $f$; 
loss function $L$; 
loss threshold $\epsilon$; 
target FDR level $\alpha \in (0,1)$; 
nonconformity scores $\{S_i\}_{i=1}^{n+m}$
\ENSURE Selected set $\mathcal{S}$ with FDR control

\STATE \textit{// Step 0: Identify calibration samples exceeding the loss threshold}
\STATE Set
$\mathcal{D}_{\mathrm{cal}}^0
= \{(X_i, Y_i): L(Y_i, \hat{Y}_i) > \epsilon\}_{i=1}^n$,
and let $n_0 = |\mathcal{D}_{\mathrm{cal}}^0|$.

\STATE \textit{// Step 1: Construct conformal $p$-values}
\FOR{$j = 1$ to $m$}
    \STATE Compute the conformal $p$-value $\hat{p}_j$ according to Eq.~\eqref{eq:conformal_p}.
\ENDFOR

\STATE \textit{// Step 2: Step-up thresholding}
\STATE Compute
$j^* = \max \left\{ j : \hat{p}_{(j)} \le \frac{\alpha j (n+1)}{m (n_0+1)} \right\}$,
where $\hat{p}_{(j)}$ is the $j$-th smallest $p$-value.

\STATE \textbf{Return} $\mathcal{S} = \{ j : \hat{p}_j \le \hat{p}_{(j^*)} \}$.
\end{algorithmic}
\end{algorithm}

We evaluate Conformal Labeling on two regression problems: sentiment analysis with GPT-4o and protein structure prediction with AlphaFold. 
We use the data provided by \cite{candes2024probablyapproximately}.
For sentiment analysis, we prompt GPT-4o to output a prediction interval $[a_i, b_i]$ for each target $Y_i$, set the predicted value as $\hat{Y}_i = (a_i+b_i)/2$, and use the interval length $U_i = b_i-a_i$ as the uncertainty score. 
For protein structure prediction, we take experimentally derived structures as ground truth and AlphaFold predictions as $\hat{Y}_i$.
Uncertainty scores are obtained from AlphaFold’s internal confidence measure, the average predicted local distance difference test (pLDDT). 
For both experiments, we employ the L2 loss function.

In Table~\ref{tab:regression}, we report the performance of Conformal Labeling on regression tasks with $\alpha=0.1$. 
In all cases, Conformal Labeling consistently controls the realized FDR below the target level, while a larger loss threshold $\epsilon$ leads to higher power. 


\begin{table*}[t]
\caption{\textbf{Performance of Conformal Labeling on regression tasks at $\alpha=0.1$.} 
Results are shown for sentiment analysis (top) and protein folding (bottom) under different values of the tolerance parameter $\epsilon$. 
In all cases, Conformal Labeling controls the realized FDR below the target level.}
\small
\centering
\begin{adjustbox}{center}
\begin{tabular}{l|l|ccc}
\toprule
\multirow{2}{*}{\textbf{Dataset}} & \multirow{2}{*}{\textbf{Metric}} & \multicolumn{3}{c}{\textbf{Method}} \\
& & CL ($\epsilon=0.05$) & CL ($\epsilon=0.06$) & CL ($\epsilon=0.07$) \\
\midrule
\multirow{2}{*}{Sentiment analysis} 
& {FDR (\%)} & 8.48\% & 8.60\% & 7.89\% \\
& {Power (\%)} & 5.04\% & 53.14\% & 94.92\% \\
\midrule
& & CL ($\epsilon=1$) & CL ($\epsilon=4$) & CL ($\epsilon=9$) \\
\midrule
\multirow{2}{*}{Protein folding}
& {FDR (\%)} & 9.67\% & 9.90\% & 9.04\% \\
& {Power (\%)} & 27.24\% & 49.73\% & 97.90\% \\
\bottomrule
\end{tabular}
\end{adjustbox}
\label{tab:regression}
\end{table*}

\section{Extensive Study}
\label{sec:extensive_study}

\subsection{Discussion of distribution shift}
\label{subsection:distributionshift}
We emphasize that the calibration set is sampled i.i.d. from a large unlabeled dataset (a \textbf{transductive} setting), leaving the remaining data for testing. 
Given a large unlabeled dataset, we first annotate a small subset as the calibration set, then apply conformal labeling to guarantee the quality of AI labeling.
This process ensures that the calibration and test sets are naturally i.i.d., satisfying the standard assumptions of conformal labeling.
In other words, Conformal Labeling generally does not encounter distribution shift in practice.

While our method cannot provide theoretical guarantees under distribution shift, we add an experiment with ResNet34 on ImageNet and ImageNet-C Brightness to evaluate its empirical performance. In particular, we use ImageNet as the calibration set and ImageNet-C Brightness with varying severities as the testing set.
We present the performance of our method across various testing sets in table \ref{tab:brightness_results}. The results show that the realized power of our method is getting worse with a higher severity of distribution shift (may be due to the degraded accuracy), while the FDR is relatively insensitive.
This demonstrates that our method is empirically robust to moderate distribution shift.

Besides, we note that prior work \citep{jin2025model} has investigated approaches for handling covariate shifts in conformal novelty detection and conformal selection. 
The same idea can be naturally incorporated into our Conformal Labeling procedure, enabling it to maintain false discovery rate control under covariate shift. We believe it can be an interesting direction for subsequent works if there are some realistic scenarios with distribution shift.

\begin{table}[t]
\centering
\caption{\textbf{Performance of Conformal Labeling under varying severity levels of ImageNet-C Brightness corruption.} 
"No shift" denotes that the test dataset also comes from ImageNet.
Conformal Labeling maintains stable FDR control close to target levels while power decreases with increasing corruption severity.}
\label{tab:brightness_results}
\footnotesize
\setlength{\tabcolsep}{4pt}
\begin{tabular}{lcccccc}
\toprule
\multirow{2}{*}{Test Dataset} & \multirow{2}{*}{Accuracy (\%)} & \multicolumn{2}{c}{$\alpha=0.05$} & \multicolumn{2}{c}{$\alpha=0.1$} \\
\cmidrule(lr){3-4} \cmidrule(lr){5-6}
 & & FDR (\%) & Power (\%) & FDR (\%) & Power (\%) \\
\midrule
No shift & 73.29 & 5.00 & 72.27 & 9.85 & 79.62 \\
Severity 1 & 68.98 & 5.88 & 65.08 & 11.74 & 81.36 \\
Severity 2 & 67.17 & 5.87 & 63.55 & 11.77 & 80.12 \\
Severity 3 & 64.15 & 5.94 & 61.52 & 11.70 & 78.05 \\
Severity 4 & 59.30 & 5.62 & 57.31 & 11.67 & 75.14 \\
Severity 5 & 52.75 & 5.56 & 53.94 & 11.75 & 72.09 \\
\bottomrule
\end{tabular}
\end{table}

\subsection{Comparison of different score functions}
\label{subsection:scorepower}
\paragraph{The choice of score function only affects the power but not the strict FDR control}
We present the results of three score functions on the ImageNet dataset in Table \ref{tab:score_comparison}. 
While MSP, Energy, and $D_\alpha$ scores lead to vastly different power, all three successfully control the FDR below the target levels (5\% and 10\%). Specifically, MSP and $D_\alpha$ achieve high power (64\% and 80\% for $\alpha=5\%$ and $10\%$, respectively), whereas Energy yields significantly lower power (11.83\% and 50.58\%).
This demonstrates that a poor score function can incur higher data collection costs by lowering power, but it does not compromise the statistical guarantee of FDR control.

\begin{table}[t]
\centering
\caption{\textbf{Comparison of FDR control and power for MSP, Energy, and $D_\alpha$ score functions}. The experiment is conducted with a ResNet-34 model on ImageNet. \textbf{Bold} indicates the highest power under the target FDR level. All methods maintain FDR close to target levels, while MSP and $D_\alpha$ achieve significantly higher power than the Energy score.}
\label{tab:score_comparison}
\footnotesize
\begin{tabular}{lcccc}
\toprule
\multirow{2}{*}{Method} & \multicolumn{2}{c}{Target FDR = 5\%} & \multicolumn{2}{c}{Target FDR = 10\%} \\
\cmidrule(lr){2-3} \cmidrule(lr){4-5}
 & FDR (\%) & Power (\%) & FDR (\%) & Power (\%) \\
\midrule
MSP & 4.97 & \textbf{63.87} & 9.97 & \textbf{80.01} \\
Energy & 4.86 & 11.83 & 9.94 & 50.58 \\
$D_\alpha$ & 4.95 & 63.80 & 9.92 & 79.74 \\
\bottomrule
\end{tabular}
\end{table}

\subsection{Impact of prediction accuracy}
\label{subsec:model_accuracy}
\paragraph{Higher prediction accuracy enables better selection results.}
The performance of Conformal Labeling depends heavily on the underlying prediction accuracy, which is influenced by model capacity and dataset difficulty.
We evaluate Conformal Labeling with Qwen3-8B, Qwen3-14B, and Qwen3-32B, whose increasing scales provide greater capacity.
MMLU-Pro, more challenging than MMLU, results in lower prediction accuracy.
Figure~\ref{fig:performance_comparison} presents the evaluation results.
In all cases, the FDR remains below and close to $\alpha = 0.1$.
Higher accuracy—achieved with stronger models or easier datasets—boosts power and the AI-labeled ratio.
Specifically, for any given model, performance is superior on MMLU compared to MMLU-Pro. Similarly, for each dataset, larger models yield greater power and AI-labeled ratios. 
Overall, higher prediction accuracy leads to better selection results by achieving higher power and AI-labeled ratio with controlled FDR.

\begin{figure}[h]
    \centering
    \includegraphics[width=0.8\textwidth]{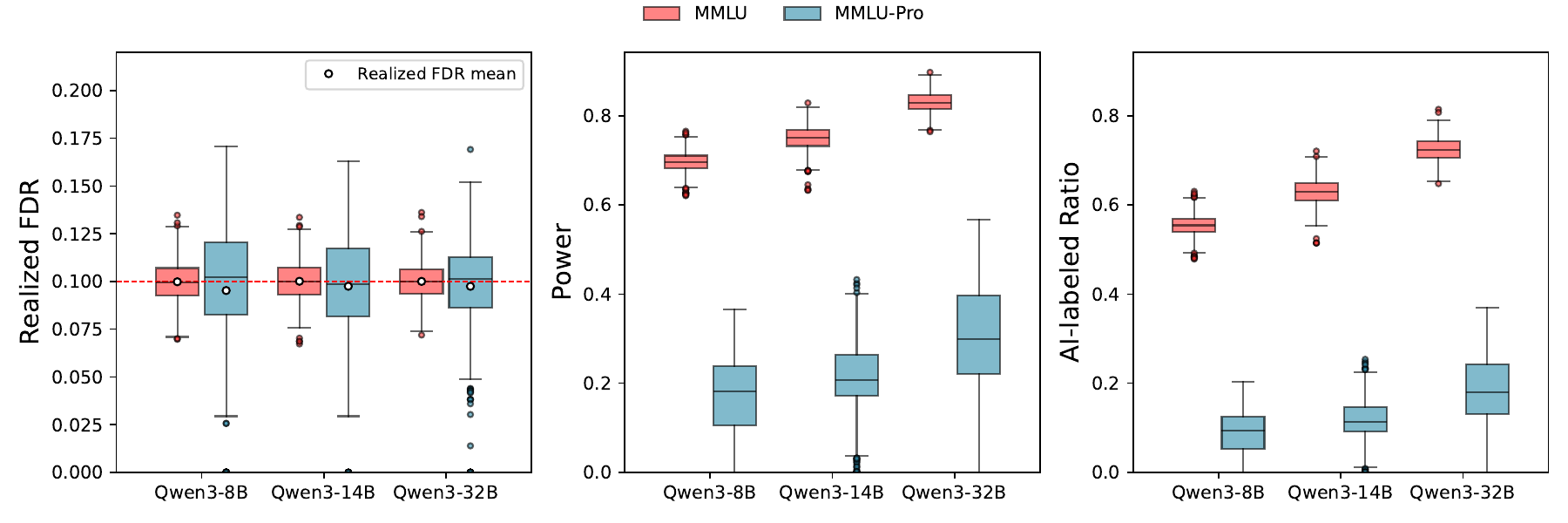}
    \caption{
    \textbf{Performance comparison of Conformal Labeling across models of varying accuracy.}
    We employ Qwen3-8B, Qwen3-14B, and Qwen3-32B (model accuracy increases with parameter count) on MMLU and MMLU-Pro with $\alpha=0.1$.
    The results show that model with higher accuracy achieves greater power and AI-labeled ratio.
    }
    \vspace{-10pt}
    \label{fig:performance_comparison}
\end{figure}

\subsection{Discussion of mFDR control and FDR control.}
\label{subsec:mfdr_vs_fdr}
In the labeling task, the mFDR is defined as:
\[
\mathrm{mFDR} = \frac{\mathbb{E}\left[\sum_{j=1}^m \mathbf{1}\{Y_{n+j} \neq \hat{Y}_{n+j} \text{ and } j \in \mathcal{R}\}\right]}{\mathbb{E}\left[\sum_{j=1}^m \mathbf{1}\{j \in \mathcal{R}\}\right]},
\] while FDR is defined as \[
\mathrm{FDR}
=
\mathbb{E}\!\left[
\frac{
\sum_{j=1}^m \mathbf{1}\{Y_{n+j} \neq \hat{Y}_{n+j} \text{ and } j \in \mathcal{R}\}
}{
\max\left\{\sum_{j=1}^m \mathbf{1}\{j \in \mathcal{R}\},\,1\right\}
}
\right].
\]
 There has been an exhaustive discussion of FDR and mFDR control. 
For example, \citet{benjamini1995controlling} proposed the concept of FDR in favor of mFDR, because the latter is impossible to control in the global null case \citep{gui2024conformal};
\citet{storey2003positive} also pointed out that mFDR cannot be used for controlling the joint behavior of the numerator (number
of false selections) and the denominator (number of selections).

\subsection{Comparison to selective prediction with calibrated confidences.}
\label{subsec:calibrated_confidence}
\paragraph{Conformal Labeling achieves higher power than baselines under the same target FDR level.}
To compare our method with selective prediction with calibrated confidences, we conduct a new experiment on ImageNet using a calibrated ResNet-34 classifier. In particular, we use a holdout dataset to learn an optimal temperature parameter for temperature scaling, achieving an ECE of $2.22\%$. Given an error level $\alpha$, we compare Conformal Labeling with two heuristic baselines.
\textbf{(1) Confidence threshold.}
Given a target error level $\alpha$, an instance is selected if its calibrated maximum softmax probability (MSP) satisfies $\mathcal{S}(x) > 1 - \alpha$.
\textbf{(2) FDR search}.

We present the results of our method and two baselines in Table \ref{tab:method_comparison} below. The results show that Conformal Labeling consistently achieves the highest power and tightest FDR control among the three methods, showing that our method reliably increases the selected set size at the same target false discovery rate. Notably, we emphasize that the two heuristic baselines cannot provide any rigorous guarantee for the FDR control. We believe this demonstrates the benefits of our method compared to the baselines.

\begin{table}[t]
\centering
\caption{\textbf{Comparison of different selection methods at target error levels $\alpha=0.05$ and $\alpha=0.10$.} Conformal Labeling achieves the highest power and the tightest FDR control among the three methods.}
\label{tab:method_comparison}
\footnotesize
\setlength{\tabcolsep}{6pt}
\begin{tabular}{lcccc}
\toprule
\multirow{2}{*}{Method} & \multicolumn{2}{c}{$\alpha=0.05$} & \multicolumn{2}{c}{$\alpha=0.10$} \\
\cmidrule(lr){2-3} \cmidrule(lr){4-5}
 & FDR (\%) & Power (\%) & FDR (\%) & Power (\%) \\
\midrule
Confidence threshold & 2.80 & 57.23 & 3.94 & 66.60 \\
FDR search & 4.51 & 63.11 & 9.65 & 75.90 \\
Conformal Labeling & 5.00 & \textbf{72.27} & 9.97 & \textbf{79.91} \\
\bottomrule
\end{tabular}
\end{table}

\subsection{End-to-End Experiment with Conformal Labeling}
\label{subsec:end_to_end}
To assess whether Conformal Labeling can facilitate training a better downstream model, we conduct an end-to-end experiment that integrates label selection with classifier retraining. 
Specifically, we evaluate whether the samples selected by conformal labeling can effectively augment the training set and improve prediction performance.
We adopt a synthetic dataset in this experiment to enable a controlled evaluation setting.
In particular, synthetic data allows us to generate a sufficiently large number of i.i.d.\ samples from a fixed distribution, which allows us to better analyze the end-to-end behavior of conformal labeling.

We generate a synthetic 10-class classification dataset with 27{,}000 samples using \texttt{scikit-learn}'s \texttt{make\_classification} function. The dataset is randomly partitioned into three disjoint subsets:
\begin{itemize}
    \item $D_1$: 2{,}000 samples used to train the initial labeling model;
    \item $D_2$: 20{,}000 samples used for AI labeling;
    \item $D_3$: 5{,}000 samples reserved for evaluation.
\end{itemize}

We first train a base classifier on $D_1$. From $D_2$, we randomly select 2{,}000 labeled samples as calibration data and apply conformal labeling with a FDR of $\alpha = 0.1$ to the remaining 18{,}000 unlabeled samples. 
This procedure yields an AI-labeled subset $D_{label}$.
We repeat this process using three different base classifiers: KNN, MLP, and XGBoost.
Table~\ref{tab:selection} reports the empirical false discovery proportion (FDP) and the number of selected samples for each classifier. In all cases, the observed FDP is well controlled below the target level, demonstrating the effectiveness of conformal labeling in selecting reliable pseudo-labeled data.
\begin{table}[t]
    \centering
    \caption{Selection results of conformal labeling with target FDR $\alpha = 0.1$.}
    \label{tab:selection}
    \begin{tabular}{lcc}
        \toprule
        Classifier & FDP (\%) & Selection Size \\
        \midrule
        KNN ($k=5$) & 9.72 & 6{,}142 \\
        MLP (100, 25) & 8.85 & 5{,}715 \\
        XGBoost & 9.56 & 4{,}006 \\
        \bottomrule
    \end{tabular}
\end{table}
After obtaining $D_{\text{label}}$, we retrain each classifier using the combined dataset $D_1 \cup D_{\text{label}}$. Table~\ref{tab:accuracy} compares the prediction accuracy on the held-out evaluation set $D_3$ between models trained only on $D_1$ and those trained on the augmented dataset.

The results show that incorporating the selected samples consistently improves downstream performance. For example, when KNN is used as the base classifier, training on $D_1 \cup D_{\text{label}}$ increases test accuracy from 64.92\% to 69.60\% (+4.68\%). Similar gains are observed for MLP (+2.60\%) and XGBoost (+0.26\%). 
These findings indicate that conformal labeling enables effective utilization of unlabeled data to train more accurate models under some scenarios.

\begin{table}[t]
    \centering
    \caption{Prediction accuracy on $D_3$ with and without conformal-labeled samples.}
    \label{tab:accuracy}
    \begin{tabular}{lccc}
        \toprule
        Classifier & Training Set & Accuracy (\%) & Improvement (\%) \\
        \midrule
        KNN & $D_1$ & 64.92 & -- \\
            & $D_1 \cup D_{\text{label}}$ & \textbf{69.60} & \textbf{+4.68} \\
        \midrule
        MLP & $D_1$ & 65.68 & -- \\
            & $D_1 \cup D_{\text{label}}$ & \textbf{68.28} & \textbf{+2.60} \\
        \midrule
        XGBoost & $D_1$ & 59.80 & -- \\
            & $D_1 \cup D_{\text{label}}$ & \textbf{60.06} & \textbf{+0.26} \\
        \bottomrule
    \end{tabular}
\end{table}

\subsection{Prompt Sensitivity Analysis}
\label{subsec:prompt_sensitivity}

To evaluate the sensitivity of our method to prompt design, we conduct experiments using \textsc{Qwen3-32B} on the \textsc{MMLU} dataset with three distinct prompts. In all experiments, the confidence measure is computed using a logits-based approach. The three prompts share identical question and answer content but differ in their phrasing and output format constraints. We describe each prompt in detail below.

\paragraph{Prompt 1.}
The model is instructed to answer a multiple-choice question by outputting only the correct option without any additional text or explanation:
\begin{quote}
\small
\texttt{The following are multi choice questions. Give ONLY the correct option, no other words or explanation:\\
Question: \{question\}\\
A: \{choice1\}\\
B: \{choice2\}\\
C: \{choice3\}\\
D: \{choice4\}\\
Answer:}
\end{quote}

\paragraph{Prompt 2.}
The model is explicitly asked to respond with only the letter corresponding to the correct choice, again without providing explanations:
\begin{quote}
\small
\texttt{You will be given multiple-choice questions. Respond with ONLY the letter of the correct choice. No explanations.\\
\\
Question: \{question\}\\
\\
A: \{choice1\}\\
B: \{choice2\}\\
C: \{choice3\}\\
D: \{choice4\}\\
\\
Answer:}
\end{quote}

\paragraph{Prompt 3.}
The model is prompted to output only the correct option, with a different structural layout that groups the answer choices under an ``Options'' header:
\begin{quote}
\small
\texttt{Answer the following multiple-choice question. Output ONLY the correct option (A, B, C, etc.). No other text.\\
\\
Question: \{question\}\\
\\
Options:\\
A: \{choice1\}\\
B: \{choice2\}\\
C: \{choice3\}\\
D: \{choice4\}\\
\\
Correct option:}
\end{quote}

Table~\ref{tab:prompt_sensitivity} reports the performance of our method under the three prompt designs in terms of mean false discovery rate (FDR), mean power, and the testing error of the underlying language model.
The results show that the mean FDR remains consistently close to the nominal level across all prompt designs, indicating that the FDR guarantee of our method is largely insensitive to prompt formulation.
Besides, we notice that prompt 3 leads to lower power than the other two prompts.
This is because the LLM achieves a higher testing error with this prompt. 
In summary, our method can be employed by LLM with different prompts.

\begin{table}[t]
\centering
\caption{Performance of our method under different prompt designs on the MMLU dataset.}
\label{tab:prompt_sensitivity}
\begin{tabular}{c|c|c|c}
\hline
Prompt & Mean FDR (\%) & Mean Power (\%) & Error (\%) \\
\hline
1 & 9.97 & 82.99 & 21.46 \\
2 & 9.94 & 82.58 & 21.48 \\
3 & 10.00 & 79.18 & 22.10 \\
\hline
\end{tabular}
\end{table}

\section{Additional experimental results}
In Figure \ref{fig:fdr_alpha}, we present the realized FDR of Conformal Labeling under different target level on three benchmarks.
The results show that ccross different target level, Conformal Labeling tightly controls the FDR in the open-ended generation tasks.

\begin{figure*}[t]
    \centering

    \begin{subfigure}{0.32\linewidth}
        \centering
        \includegraphics[width=\linewidth]{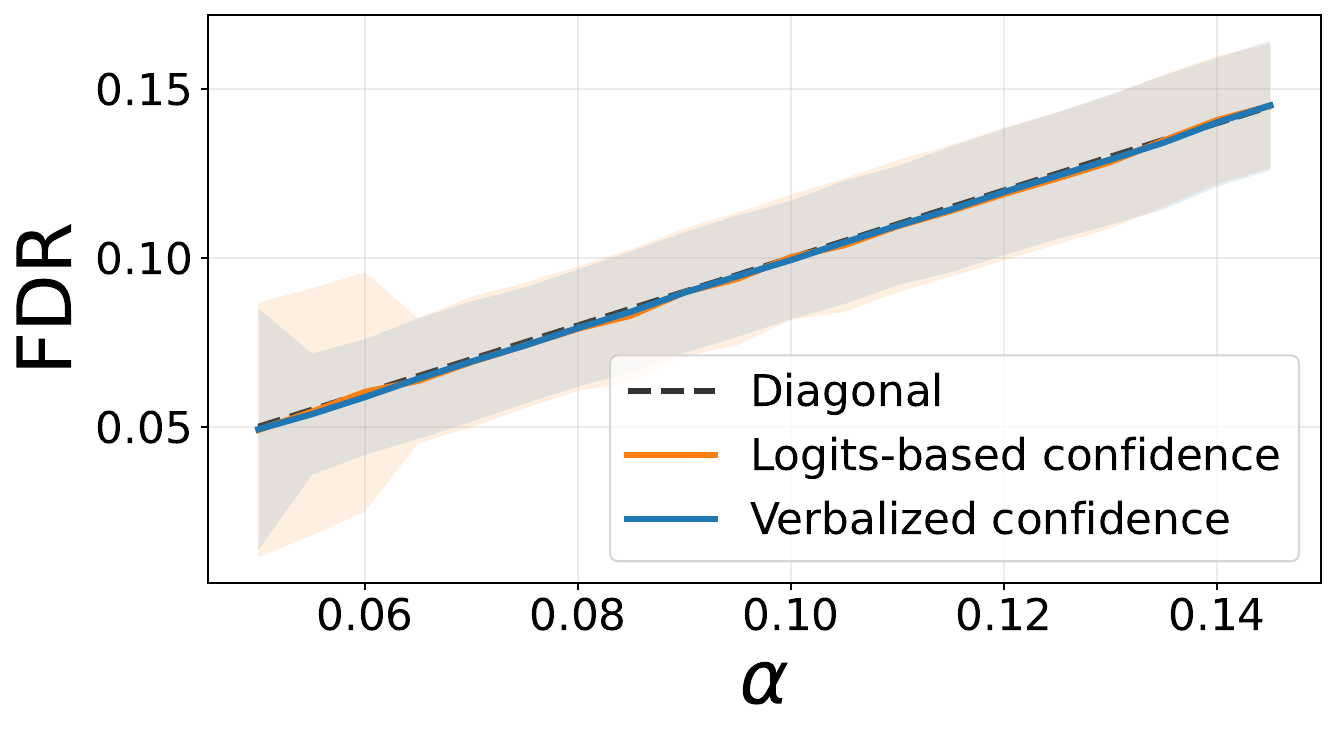}
        \caption{MMLU-Redux}
        \label{fig:fdr_mmlu}
    \end{subfigure}
    \hfill
    \begin{subfigure}{0.32\linewidth}
        \centering
        \includegraphics[width=\linewidth]{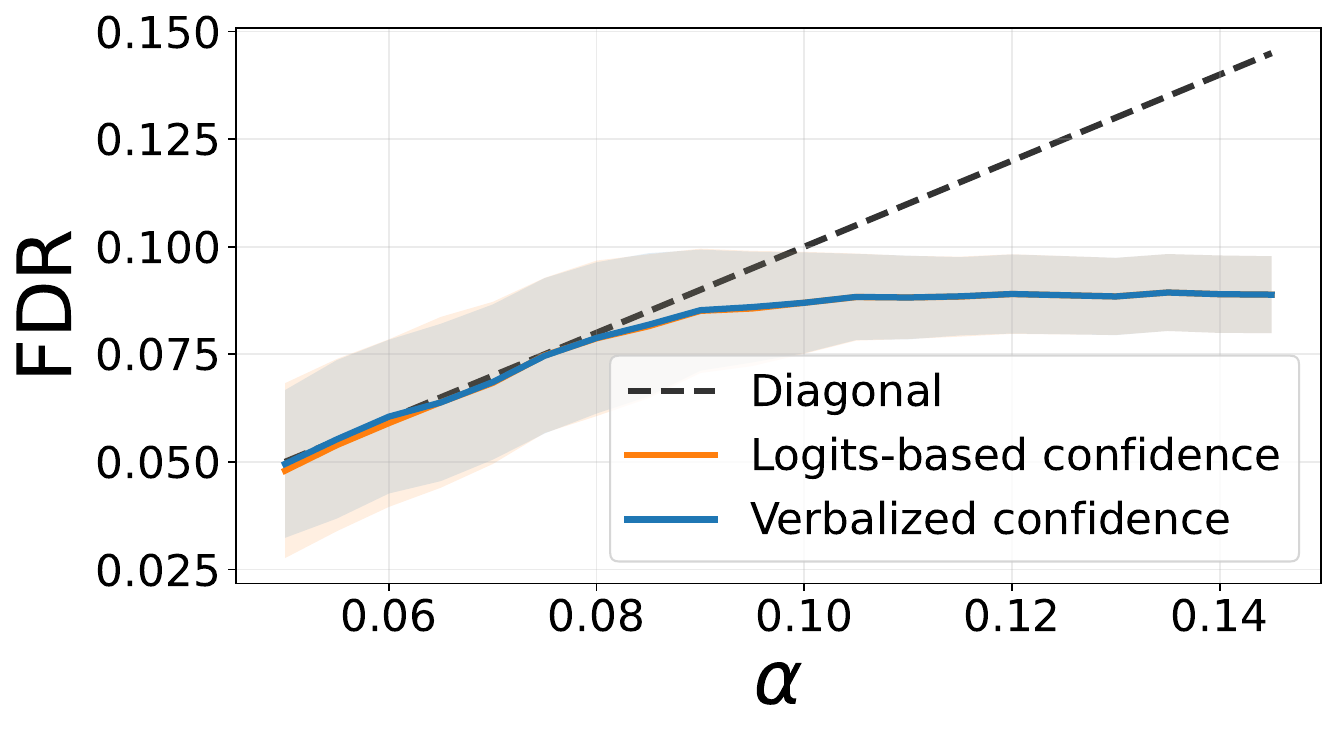}
        \caption{MATH-L5}
        \label{fig:fdr_math}
    \end{subfigure}
    \hfill
    \begin{subfigure}{0.32\linewidth}
        \centering
        \includegraphics[width=\linewidth]{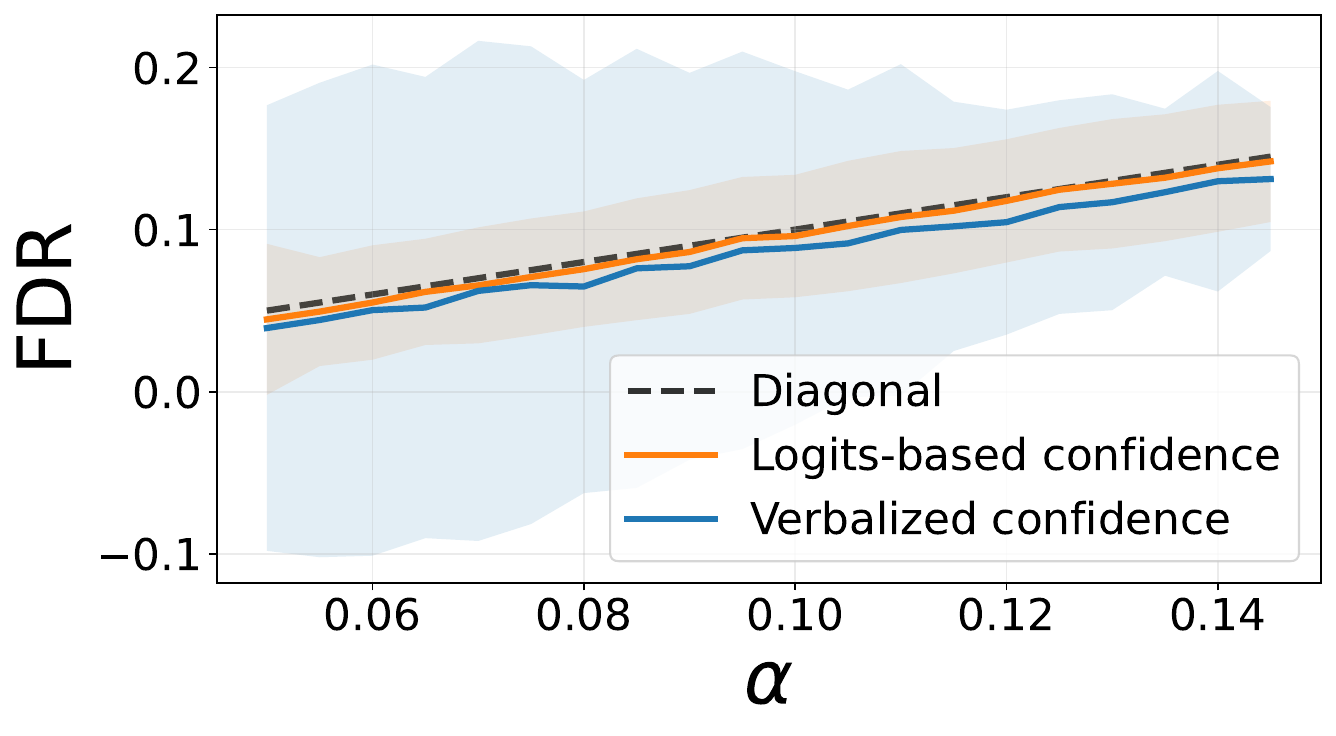}
        \caption{Zebra-Logic}
        \label{fig:fdr_zebra}
    \end{subfigure}

    \caption{
    \textbf{Empirical FDR of Conformal Labeling across target levels.}
    Results are obtained using Qwen3-4B-Instruct-2507 on three benchmarks.
    The dashed line indicates the target FDR level $\alpha$.
    Shaded areas denote standard deviations.
    }
    \label{fig:fdr_alpha}
    \vspace{-10pt}
\end{figure*}

We present the reliability diagrams of Qwen3-4B-Instruct-2507 on MMLU-redux, MATH-L5, and Zebra-Logic in Figure~\ref{fig:reliability_6panel}.
The results show that logits-based confidence achieves lower ECE than verbalized confidence in all datasets except for Zebra-Logic, indicating superior reliability of logits-based confidence.

\begin{figure*}[t]
    \centering
    \includegraphics[width=0.8\textwidth]{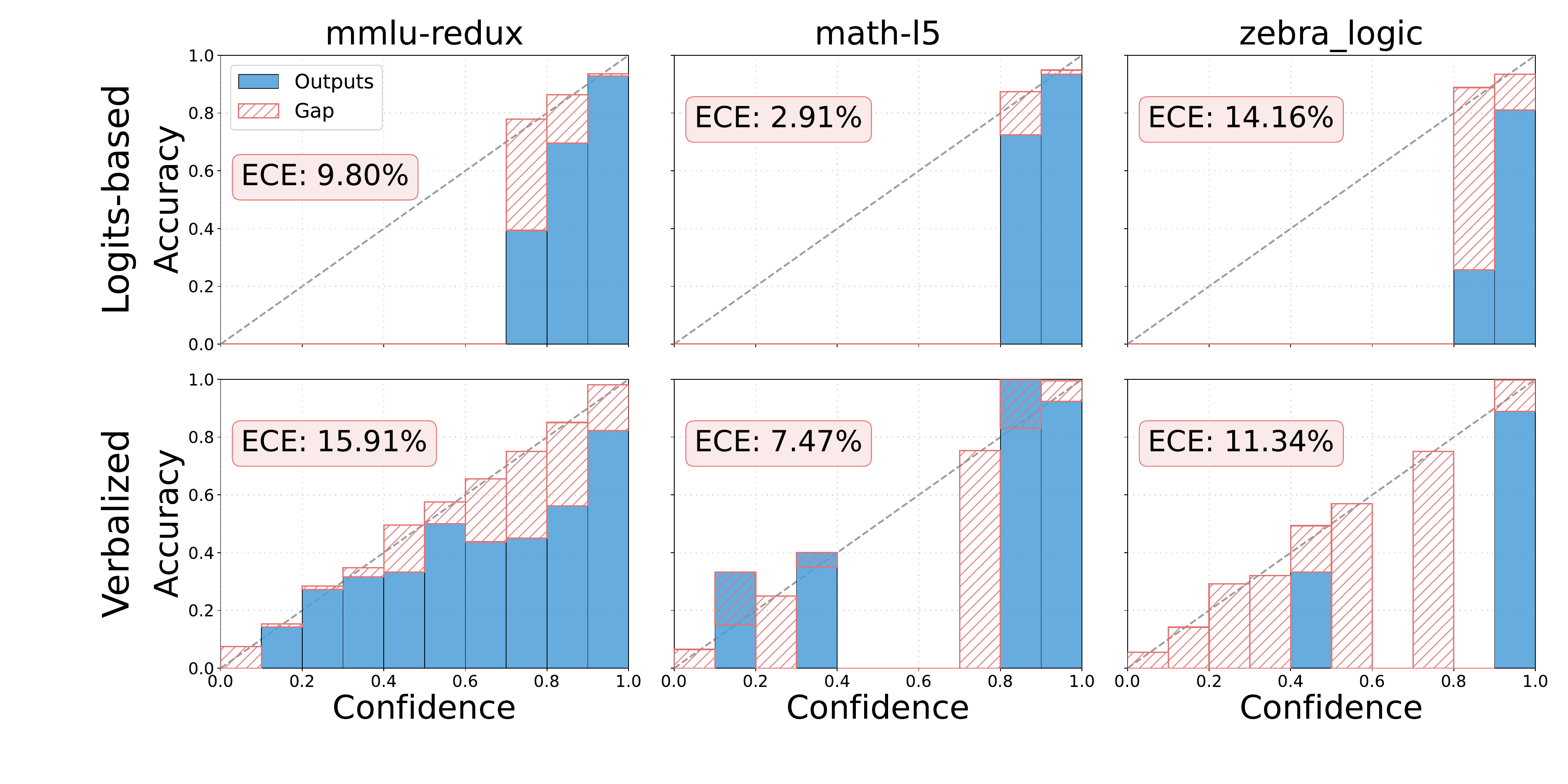}
    \caption{
        Reliability diagrams of Qwen3-4B-Instruct-2507 across three benchmark datasets
        (MMLU-Redux, MATH-L5, and Zebra-Logic).
        Each column corresponds to a dataset.
        The top row uses logits-based confidence, while the bottom row uses verbalized confidence.
        }
    \label{fig:reliability_6panel}
\end{figure*}


We present the $p$-value distribution of Qwen3-4B-Instruct-2507 using logits-based and verbalized confidence in Figure~\ref{fig:p_value_distribution}.
The results show that the logits-based score provides a clearer separation between incorrect and correct predictions in most datasets.

\begin{figure*}[t]
    \centering
    \includegraphics[width=0.8\textwidth]{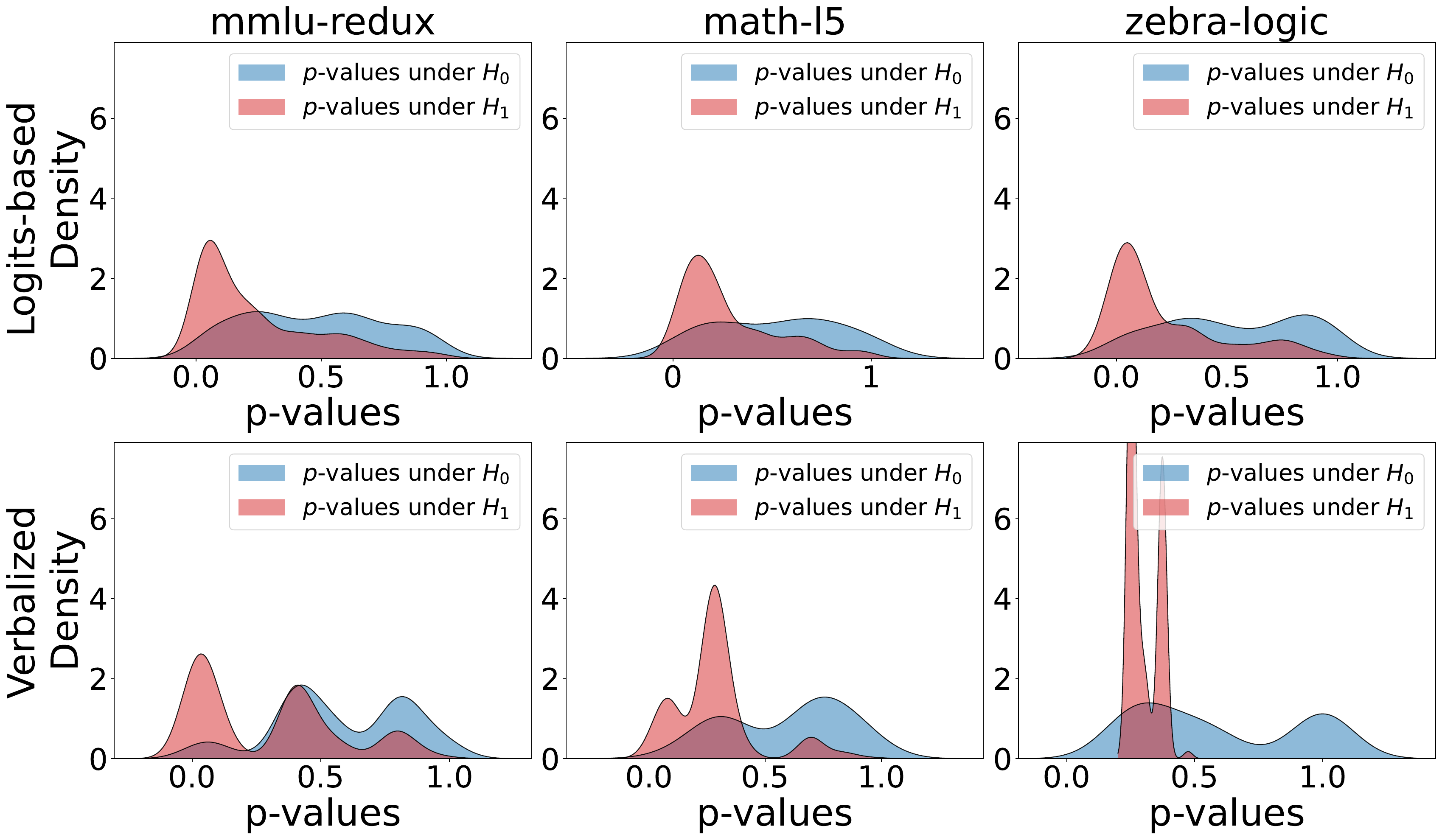}
    \caption{
        $p$-value distribution of Qwen3-4B-Instruct-2507 under different uncertainty score functions across three benchmark datasets.
        Each column corresponds to a dataset.
        The top row uses logits-based confidence, while the bottom row uses verbalized confidence.
        }
    \label{fig:p_value_distribution}
\end{figure*}

\end{document}